\definecolor{ari}{HTML}{1f77b4}
\definecolor{arp}{HTML}{ff7f0e}
\definecolor{arr}{HTML}{2ca02c}
\definecolor{codegreen}{rgb}{0,0.6,0}
\definecolor{codegray}{rgb}{0.5,0.5,0.5}
\definecolor{codepurple}{rgb}{0.58,0,0.82}
\definecolor{backcolour}{rgb}{0.95,0.95,0.92}
\lstdefinestyle{codestyle}{
    backgroundcolor=\color{backcolour},   
    commentstyle=\color{codegreen},
    keywordstyle=\color{magenta},
    numberstyle=\tiny\color{codegray},
    stringstyle=\color{codepurple},
    basicstyle=\ttfamily\footnotesize,
    breakatwhitespace=false,         
    breaklines=true,                 
    captionpos=b,                    
    keepspaces=true,                 
    numbers=left,                    
    numbersep=5pt,                  
    showspaces=false,                
    showstringspaces=false,
    showtabs=false,                  
    tabsize=2
}
\crefname{section}{Sec.}{Secs.}
\Crefname{section}{Section}{Sections}
\Crefname{table}{Table}{Tables}
\crefname{table}{Tab.}{Tabs.}
\Crefname{appendix}{App.}{App.}
\def\eqref#1{equation~\ref{#1}}
\def\1{\bm{1}}
\def\vs{{\bm{s}}}
\DeclareMathAlphabet{\mathsfit}{\encodingdefault}{\sfdefault}{m}{sl}
\SetMathAlphabet{\mathsfit}{bold}{\encodingdefault}{\sfdefault}{bx}{n}
\newcommand{\E}{\mathbb{E}}
\newcommand{\Ss}{\mathcal{S}}
\newcommand{\bX}{\mathbf{X}}
\newcommand{\bY}{\mathbf{Y}}
\newcommand{\precision}{\operatorname{RP}}
\newcommand{\recall}{\operatorname{RR}}
\newcommand{\aprecision}{\operatorname{ARP}}
\newcommand{\arecall}{\operatorname{ARR}}
\newcommand{\ari}{\operatorname{ARI}}
\newcommand{\fgaprecision}{\operatorname{FG-ARP}}
\newcommand{\fgarecall}{\operatorname{FG-ARR}}
\newcommand{\fgari}{\operatorname{FG-ARI}}
\newtheorem{prop}{Proposition}
\newtheorem{definition}{Definition}
\crefname{figure}{Fig.}{Figs.}
\crefname{prop}{Prop.}{Props.}
\crefname{appendix}{Appx.}{Appxs.}
\crefname{algorithm}{Alg.}{Algs.}
\crefname{theorem}{Thm.}{Thms.}
\crefname{equation}{Eq.}{Eqs.}
\crefname{definition}{Defn.}{Defns.}
\crefname{cor}{Corollary}{Corollaries}
\crefname{lem}{Lemma}{Lemmas}
\crefname{table}{Tab.}{Tabs.}
\newcommand{\ms}[1]{}
\newcommand{\zimmerrol}[1]{}
\newcommand{\svs}[1]{}
\newcommand{\tk}[1]{}
\renewcommand{\limits}{}
\DeclareRobustCommand\onedot{\futurelet\@let@token\@onedot}
\def\@onedot{\ifx\@let@token.\else.\null\fi\xspace}
\def\eg{\emph{e.g}\onedot} 
\def\ie{\emph{i.e}\onedot} 
\def\cf{\emph{cf}\onedot} 
 \def\vs{\emph{vs}\onedot}
\newcommand*\circled[1]{\tikz[baseline=(char.base)]{
            \node[shape=circle,draw,inner sep=2pt,scale=0.8] (char) {#1};}}
\newcommand{\todo}[1]{\textcolor{red}{[\textbf{TODO} #1]}}
\newcommand\TODO\todo
\icmltitlerunning{Sensitivity of Slot-Based Object-Centric Models to their Number of Slots}
\begin{document}

\twocolumn[
\icmltitle{Sensitivity of Slot-Based Object-Centric Models to their Number of Slots}

\icmlsetsymbol{equal}{*}

\begin{icmlauthorlist}
\icmlauthor{Roland S. Zimmermann}{equal,tue}
\icmlauthor{Sjoerd van Steenkiste}{goog}
\icmlauthor{Mehdi S. M. Sajjadi}{gdm}
\icmlauthor{Thomas Kipf}{gdm}
\icmlauthor{Klaus Greff}{gdm}
\end{icmlauthorlist}

\icmlaffiliation{goog}{Google Research} 
\icmlaffiliation{gdm}{Google DeepMind}  %
\icmlaffiliation{tue}{T\"ubingen AI Center, Max Planck Institute for Intelligent Systems}

\icmlcorrespondingauthor{Roland S. Zimmerman}{research@rzimmermann.com}

\icmlkeywords{Object-centric learning, Representation learning, Perceptual Grouping}

\vskip 0.3in
]

\printAffiliationsAndNotice{\textsuperscript{*}Work done while at Google Research.}  %

\begin{abstract}
    Self-supervised methods for learning object-centric representations have recently been applied successfully to various datasets.
    This progress is largely fueled by slot-based methods, whose ability to cluster visual scenes into meaningful objects holds great promise for compositional generalization and downstream learning.
    In these methods, the number of slots (clusters) $K$ is typically chosen to match the number of ground-truth objects in the data, even though this quantity is unknown in real-world settings.
    Indeed, the sensitivity of slot-based methods to $K$, and how this affects their learned correspondence to objects in the data has largely been ignored in the literature.
    In this work, we address this issue through a systematic study of slot-based methods. 
    We propose using analogs to precision and recall based on the Adjusted Rand Index to accurately quantify model behavior over a large range of $K$.
    We find that, especially during training, incorrect choices of $K$ do not yield the desired object decomposition and, in fact, cause substantial oversegmentation or merging of separate objects (undersegmentation).
    We demonstrate that the choice of the objective function and incorporating instance-level annotations can moderately mitigate this behavior
    while still falling short of fully resolving this issue.
    Indeed, we show how this issue persists across multiple methods and datasets and stress its importance for future slot-based models.
\end{abstract}

\section{Introduction}

 \begin{figure}[tbh]
        \centering
        \includegraphics[width=\linewidth]{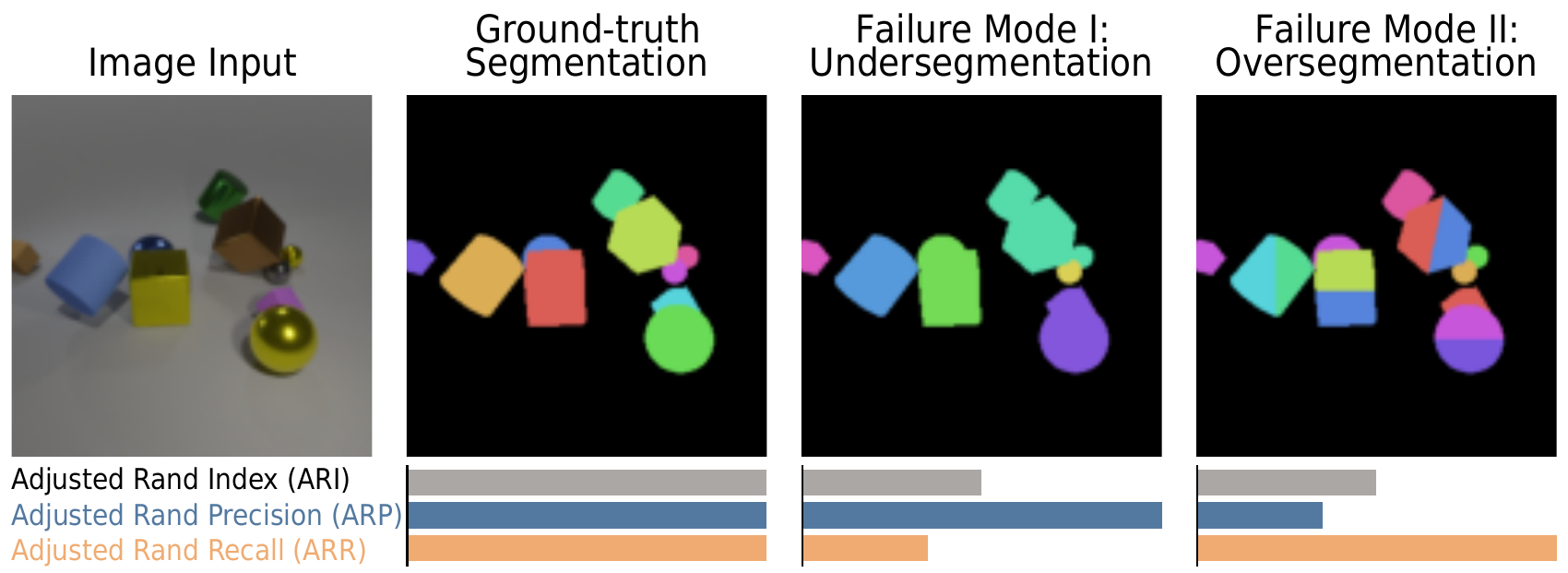}
        \caption{
        \textbf{Differentiating distinct failure modes.}
        We show that certain design choices influence the learned notion of objectness in object-centric models and identify two common failure modes: (i) the model uses a too fine-grained definition of objects, resulting in split-up objects (oversegmentation) and (ii) the model's definition of objects is too coarse-grained, leading to merging of objects (undersegmentation). Although these failure modes are contrary, it is impossible to distinguish them with the common $\ari$ metric, whereas our newly proposed $\aprecision$ and $\arecall$ metrics clearly separate them (see also \cref{fig:precsion_recall_ari_checkerboard}).}
        \label{fig:over_undersegmentation}
    \end{figure}

    Humans perceive and understand the world in terms of objects and their relationships. 
    By gluing together raw perception and symbol-like abstraction, objects form the fundamental building blocks of our higher level cognition, and support many of our impressive generalization capabilities \citep{spelke2007core, johnson2010mental}.
    The goal of object-centric representation learning is to replicate this ability and, thus, afford neural networks a similar robustness and capacity for systematic generalization. 
    Self-supervised learning of meaningful object representations is a challenging problem, and most work in this area is limited to well-controlled synthetic datasets, though there has recently been significant progress, with datasets becoming progressively more complex and realistic \cite{karazijaClevrTexTextureRichBenchmark2021, greffKubricScalableDataset2022, sajjadiObjectSceneRepresentation2022, elsayedSAViEndtoEndObjectCentric2022a}.
    This progress is driven largely by neural slot-based approaches which learn to discover meaningful objects by iteratively clustering their inputs into a set of slots \citep{greffBindingProblemArtificial2020}. In recent years, most slot-based approaches have been powered by Slot Attention \citep{locatelloObjectCentricLearningSlot2020}.
    
    In said methods, the number of slots $K$ is a hyperparameter that is typically assumed fixed and chosen a priori to match the intended (ground-truth) number of objects in the data.
    Unfortunately, for real-world settings, the ``true number'' of objects is usually unknown and can differ by orders of magnitude depending on the scene and task.  
    More fundamentally, even the definition of what constitutes an object becomes less clear here and it is up to a certain degree task-dependent: Is a tree a single object or should every branch and leaf be considered as a separate object?
    To scale slot-based methods to real-world data then, it is important to understand the effect of the hyperparameter $K$ on the learned object representations and their correspondence to the intended objects in the scene.
    How sensitive are slot-based methods to the choice of $K$, and what happens if it is chosen too small or too large? 
    The number of slots can often also be varied after training, which raises an additional question about its effect at inference time. 
    Finally, what is the effect of common variations such as changes in architecture, slot initialization, and training objective?

    In this paper, we present the first systematic study that investigates these questions in detail.
    In particular, we make the following contributions:
    \begin{itemize}
        \item We propose analogs to precision and recall based on the Adjusted Rand Index (ARI) \citep{randObjectiveCriteriaEvaluation1971} to quantify the extent to which models are oversegmenting \vs undersegmenting. 
        \item We empirically evaluate three recent Slot Attention based methods on five different datasets and provide insights into their behavior for large ranges of $K$ both during training and inference time.
        \item Finally, we investigate the role of different learning objectives and slot initializations on this phenomenon. 
    \end{itemize}

\section{Background} \label{sec:related_work}
\paragraph{Object-Centric Representation Learning}
    in an unsupervised fashion has been an active area of research for the past years \cite{greffBindingProblemArtificial2020, yuanCompositionalSceneRepresentation2022a}.
    Although models for both images and videos have been scaled from simple toy data \citep{greffBindingReconstructionClustering2016} to more complex datasets \citep{kipfConditionalObjectCentricLearning2022, elsayedSAViEndtoEndObjectCentric2022a}, there still remains a gap to real-world data. 
    Most approaches for object-centric learning are slot-based, meaning that they aim to extract and represent distinct objects from the data into separate variables (called slots) \citep{greffBindingProblemArtificial2020}. Here, the maximum number of slots $K$ is a hyperparameter that needs to be tuned. There is work proposing more automated ways to infer the number of slots, however, this comes with the cost of introducing other hyperparameters and shifting the problem to a different stage of the inference \citep{engelckeGENESISV2InferringUnordered2022, bearLearningPhysicalGraph2020}.

    In this work, we focus on analyzing three methods in particular.
    Slot Attention (SA) \citep{locatelloObjectCentricLearningSlot2020} is an algorithm that iteratively extracts information from input data and stores them into object slots using attention \citep{luongEffectiveApproachesAttentionbased2015, bahdanauNeuralMachineTranslation2016, vaswaniAttentionAllYou2017}. This means that slots compete for representing information, based on the information they already contain. Effectively, SA can be seen as a k-means clustering \citep{MacQueen1967} applied on learned features.
    Slot Attention for Video (SAVi) \citep{kipfConditionalObjectCentricLearning2022} extends SA from static images to videos by introducing a transition network modeling the temporal dynamics of each slot. Furthermore, this architecture can use additional signals to condition the model on certain objects, \eg, by using the bounding boxes of objects in the first frame of a video.
    The Object Scene Representation Transformer (OSRT) \citep{sajjadiObjectSceneRepresentation2022} combines SA with SRT~\citep{sajjadiSceneRepresentationTransformer2022}, learning a 3D-aware object-centric representation from multiple views. After training, the model can synthesize novel views and their corresponding object segmentation masks.
    
    \paragraph{Model Evaluation}
    and comparison between object-centric models depends on the dataset and architecture: If the full information used to generate the dataset is available, readout performance for different object-specific properties can be used to quantify the representations' quality \citep{locatelloObjectCentricLearningSlot2020, dittadiGeneralizationRobustnessImplications2022}.    
    However, if this information is not available, as is the case for most real-world datasets, a different approach is needed.
    Here, a typical approach is to take segmentation maps extracted from the model\footnote{For example, in case of generative compositional models \citep[\eg][]{burgessMONetUnsupervisedScene2019} the compositing mask, and in case of attention-based methods the encoder's attention map, can be interpreted as such a segmentation map.} and compare these with ground-truth instance-level segmentation maps.
    
    In the past, different metrics have been used for the evaluation of segmentation masks (produced by object-centric models). For one, the Adjusted Mutual Information (AMI) \citep{vinhInformationTheoreticMeasures2010} has been used to evaluate segmentation masks of object-centric models. For another, the mean Intersection-over-Union (mIoU) \citep{jaccardDistributionFloreAlpine1901, engelckeGENESISGenerativeScene2020}, which includes solving a matching problem between predictions and ground truth, %
    and the mean Segmentation Coverage (mSC) \citep{arbelaezContourDetectionHierarchical2011, engelckeGENESISGenerativeScene2020} have been proposed.

    Finally, the go-to choice of metric in recent years is the Adjusted Rand Index (ARI) \citep{randObjectiveCriteriaEvaluation1971, hubertComparingPartitions1985, greffMultiObjectRepresentationLearning2020a} which treats the segmentation problem as a clustering task and measures clustering similarity such that it is invariant under arbitrary permutations of the clusters.
    For two segmentation maps $\bX, \bY \in \mathbb{Z}^N$ with up to $I$ and $J$ classes, respectively, the ARI is defined as \cite{randObjectiveCriteriaEvaluation1971, albatineh2006similarity}:
    \begin{align*}
        \ari(\bX, \bY) &= \frac{\sum\limits_{i=1}^I\sum\limits_{j=1}^J m_{ij}^2 - \E_{\bY'}\left[\sum\limits _{i=1}^I\sum\limits _{j=1}^J m_{ij}^2\right]}{P + Q + 2m - \E_{\bY'}\left[\sum\limits_{i=1}^I\sum\limits_{j=1}^J m_{ij}^2\right]}.
    \end{align*}
    Here, $m_{ij}$ denotes the matching matrix indicating how many pixels are segmented as label $i$ and $j$ in $\bX$ and $\bY$ respectively. Further $m = \sum_{ij} m_{ij}$ is the total number of pixels, $Q = \sum_{i=1}^I m_{i+}^2 - m$ and $P = \sum_{j=1}^J m_{+j}^2 - m$. The expectation value used here is computed using a hypergeometric distribution \citep{hubertComparingPartitions1985}.
    
    While a perfect ARI score is indicative of a model whose notion of objects is well aligned with the target/human notion, interpreting subpar scores is less clear. In particular, two \emph{differently behaving models} -- one that learned a too coarse notion of objects and merges independent ground-truth objects, and one that learned a too fine-grained notion and splits up ground-truth objects into multiple objects -- can yield the \emph{same} ARI score while clearly exhibiting different failure modes (for example see \cref{fig:precsion_recall_ari_checkerboard}). Note that the other previously mentioned metrics share this shortcoming.

    Further evaluation metrics were proposed in the domain of image segmentation. \citet{gong2011conditional} introduce two metrics based on conditional entropies to detect over- and undersegmentation. Further, precision and recall scores based on the boundary contours of segmentation maps were proposed \citep{martinLearningDetectNatural2004}. However, they have not been applied for the evaluation of object-centric models yet, and their value scales are incompatible with that of the commonly used ARI score, complicating comparisons.
    
    Although the mIoU metric appears similar to the (non-adjusted) Rand Recall, note that it grants fewer insights into a model's behavior as it does not distinguish between too large and too small segmentations. A similar argument can also be made for the Mean Average Precision (mAP) metric \citep{hariharanSimultaneousDetectionSegmentation2014} and the proposed ARP, as the mAP metric leverages the IoU metric. 

\section{Beyond ARI with Precision and Recall}

    To obtain a more complete picture of the behavior and performance of slot-based models, we propose to use an additional set of metrics. Among other things, these allow us to \emph{distinguish models that over- and undersegment a scene} and, therefore, provide more fine-grained insights into a model's behavior. As outlined in \cref{sec:related_work} and visible in \cref{fig:over_undersegmentation} the commonly used $\ari$ score groups models with different behaviors resulting in a less detailed understanding and detection of potential shortcomings of models. 
    
    Inspired by the notion of precision and recall from the information retrieval literature \citep{rijsbergenInformationRetrieval1979} we introduce two extensions %
    of the ARI metric for the evaluation of object-centric models:
    The Adjusted Rand Precision (ARP) measures how many pairs of pixels that are grouped together in the model's prediction belong to the same object in the ground truth; conversely, the Adjusted Rand Recall (ARR) measures how many pixel pairs of the same ground-truth object are grouped together in the predictions. Here, the term \emph{adjusted} refers to adjusting the metrics for chance agreement by normalizing them with the value expected for randomly shuffled segmentation maps.  Due to the similarity with the $\ari$ metric, resulting in the same value range, interpreting results remains relatively intuitive and $\ari$, $\aprecision$ and $\arecall$ can easily be compared.

    Specifically, we define the ARP and ARR as:
    \begin{align*}
        \aprecision(\bX, \bY) &= \frac{\sum\limits _{i=1}^I\sum\limits_{j=1}^J m_{ij}^2 - \E_{\bY'}\left[\sum\limits _{i=1}^I\sum\limits _{j=1}^J m_{ij}^2\right]}{Q + m - \E_{\bY'}\left[\sum\limits _{i=1}^I\sum\limits _{j=1}^J m_{ij}^2\right]}
    \end{align*}
    \begin{align*}
        \arecall(\bX, \bY) &= \frac{\sum\limits_{i=1}^I\sum_{j=1}^J m_{ij}^2 - \E_{\bY'}\left[\sum\limits _{i=1}^I\sum\limits _{j=1}^J m_{ij}^2\right]}{P + m - \E_{\bY'}\left[\sum\limits _{i=1}^I\sum\limits _{j=1}^J m_{ij}^2\right]},
    \end{align*}
    where $m_{ij}, m, P, Q$ are defined as in \cref{sec:related_work}. While the precision $\aprecision(\bX, \bY)$ measures the fraction of pixel pairs belonging to the same segment in $\bY$ given that they do in $\bY$, the recall $\arecall(\bX, \bY)$ measures opposite, \ie, the faction of pixel pairs belonging to the same segment in $\bX$ given that they do in $\bX$. Note, that $\aprecision$ and $\arecall$ are antisymmetric with respect to their arguments, \ie $\aprecision(\bX, \bY) = \arecall(\bY, \bX)$ (see \cref{prop:ar_ap_antisymmetry}).

    An illustration of the abilities of these metrics to quantify a model's behavior is given in \cref{fig:precsion_recall_ari_checkerboard}. Specifically, we compare the behavior of $\ari$, $\aprecision$ and $\arecall$ for different failure cases of models showing that $\aprecision$ and $\arecall$ yield insights into model failures indistinguishable by $\ari$.
    
    \begin{figure}[tb]
        \centering
        \includegraphics[width=\linewidth]{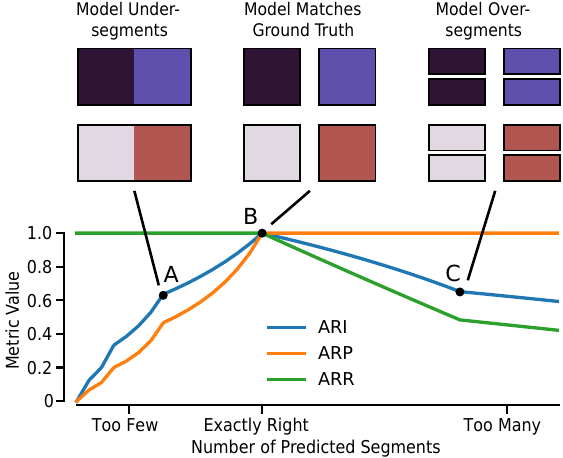}
        \caption{\textbf{Evaluation metrics on toy data.}
        The top shows three segmentations with different characteristics: A) undersegmentation, B) perfect segmentation, and C) oversegmentation.
        While ARI is sensitive to both failures, it does not distinguish between A and C which have identical ARI scores.
        In contrast, ARP and ARR allow one to easily distinguish these distinct failure cases.
        More details in \cref{sec:precsion_recall_ari_checkerboard_details}.
        }
        \label{fig:precsion_recall_ari_checkerboard}
    \end{figure}
    
    We supply an implementation of these metrics in \cref{lst:arr_arp}, while a more detailed definition and analysis of these metrics are shown in \cref{appx:definition_properties_precision_recall}. Most importantly, the $\ari$ score is always bound to be between $\aprecision$ and $\arecall$ (\cref{prop:ap_ar_bound_ari}) and the $\ari$ can be seen as an $F_1$ score of $\aprecision$ and $\arecall$ (\cref{prop:ari_f1_harmonic_mean_ap_ar}).
    On a high level, the proposed $\aprecision$ metric can only be high if the model does not oversegment the input (\ie, if it's notion of objectness is not too fine), while the $\arecall$ can only be high if the model does not undersegment (\ie if its notion of objectness is not too coarse).
    As we are only interested in measuring how well the model discovers and segments objects, we follow common practice \citep{greffTaggerDeepUnsupervised2016}
    and ignore non-object pixels when computing the metrics; we denote these variants of the metrics as FG-ARI (Foreground ARI), FG-ARP and FG-ARR.

    Although these metrics were not originally proposed by \citet{randObjectiveCriteriaEvaluation1971}, we chose their name to highlight the similarity to the ARI -- a metric computed over the entire segmentation map -- and at the same time distinguish them from the sometimes used definition of precision/recall leveraging segmentation contours \citep{arbelaezContourDetectionHierarchical2011, martinLearningDetectNatural2004}. We also note that a set of related metrics was previously mentioned in a different context, namely for the evaluation of clustering results \citep{wallaceMethodComparingTwo1983} and medical imaging \citep{arganda-carrerasCrowdsourcingCreationImage2015}.

\section{Experiments} \label{sec:experiments}
    An overview of the models and datasets analyzed in this work is displayed in \cref{fig:models}. Technical details on the architectures used can be found in \cref{sec:experiemntal_details}. 

\begin{figure}
    \centering
    \includegraphics[width=\columnwidth]{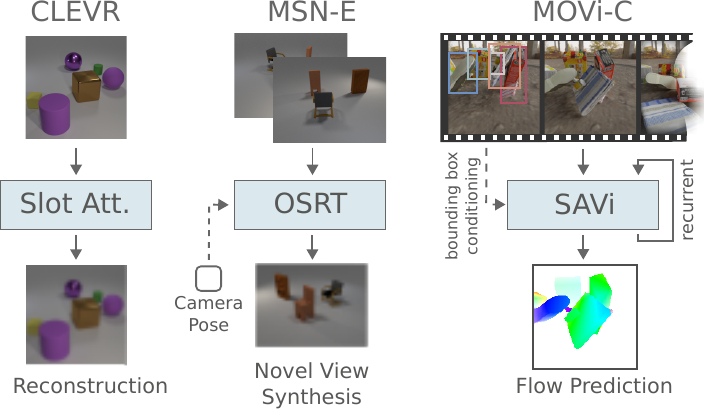}
    \caption{\textbf{Overview of analyzed models and datasets.} We empirically investigate the sensitivity of three slot-based models to the chosen number of slots, namely a vanilla Slot Attention architecture, SAVi and OSRT. These models are trained and evaluated on image datasets --- \ie, CLEVR \citep{johnsonCLEVRDiagnosticDataset2016}, MSN \cite{stelznerDecomposing3DScenes2021} -- and video datasets --- \ie,  MOVi-A/C \cite{greffKubricScalableDataset2022}, CATER \cite{girdharCATERDiagnosticDataset2020}.}
    \label{fig:models}
\end{figure}

\subsection{Sensitivity to the Number of Slots}
\label{sec:number_of_slots_clevr_cater}

    \newcommand\circlenr[1]{
    \textcircled{\raisebox{-0.9pt}{#1}}
    }
    \newcommand\qpicgt[2]{
        \begin{minipage}{0.0875\linewidth}
            \vspace{0.5em}
            \begin{center}\footnotesize#2\end{center}
            \vspace{-0.45em}
            \includegraphics[width=\linewidth]{figures/qualitative/hard/gt_#1}
        \end{minipage}
    }
    \newcommand\qpic[2]{
        \begin{minipage}{0.0875\linewidth}
            \vspace{0.5em}
            \begin{center}\circlenr{\small{#2}}\end{center}
            \vspace{-0.45em}
            \includegraphics[width=\linewidth]{figures/qualitative/hard/#1}
        \end{minipage}
    }

    \begin{figure*}[tbh]
        \centering
        \includegraphics[width=\textwidth]{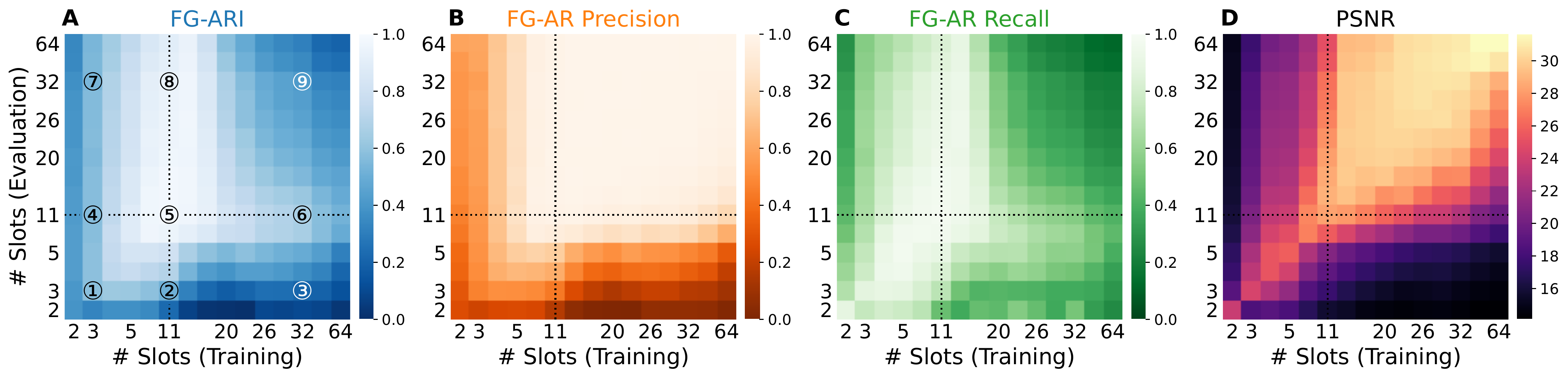}
        \setlength{\tabcolsep}{-.2mm}
        \def\arraystretch{2}
        \begin{tabular}{cc@{\hskip 0.5mm}ccccccccc}
            \qpicgt{rgb}{GT \vphantom{g}RGB} &
            \qpicgt{seg}{GT Segm.} &
            \qpic{3-3}{1} &
            \qpic{11-3}{2} &
            \qpic{32-3}{3} &
            \qpic{3-11}{4} &
            \qpic{11-11}{5} &
            \qpic{32-11}{6} &
            \qpic{3-32}{7} &
            \qpic{11-32}{8} &
            \qpic{32-32}{9}
        \end{tabular}
        \caption{\textbf{Sensitivity of Slot Attention to the number of slots on CLEVR.} We analyze the sensitivity of Slot Attention during training and inference, measured by Foreground Adjusted Rand Index ($\fgari$, \textbf{A}), Foreground Adjusted Rand Precision ($\fgaprecision$, \textbf{B}), Foreground Adjusted Rand Recall ($\fgarecall$, \textbf{C}), and reconstruction quality (PSNR, \textbf{D}). See \cref{fig:precsion_recall_ari_checkerboard} for a description of $\aprecision$ and $\arecall$. The black dotted lines indicate that the minimally required number of slots was either used during training (vertical, analyzed in \cref{sec:number_of_slots_clevr_cater_inference}) or during inference (horizontal, analyzed in \cref{sec:number_of_slots_clevr_cater_training}).
        The bottom row shows the predicted segmentations for nine different combinations of $K_{\textrm{train}}$ and $K_{\textrm{eval}}$ as indicated in \textbf{A}. 
        }
        \label{fig:clevr_sa_n_slots_train_eval}
    \end{figure*}

    We study the robustness of slot-based models towards domain mismatch/misspecification.
    For a fixed dataset, we ask: How does the behavior of a model change when its number of slots $K$ is varied? How should $K$ be chosen if the true number of objects during training or inference is unknown? And can simple unsupervised metrics such as the reconstruction error (\ie PSNR) for auto-encoding models inform this decision?
    These are informative questions to be able to apply unsupervised slot-based object-centric models to real-world data as the number of objects is unknown and might vary substantially across scenes.
    
    To investigate this type of robustness, we begin by training and evaluating 225 Slot Attention configurations ~\citep{locatelloObjectCentricLearningSlot2020} that vary the number of slots during training and inference on the CLEVR dataset \citep{johnsonCLEVRDiagnosticDataset2016}.
    For each configuration, we train three models with different seeds, and measure the $\ari$, $\aprecision$,  $\arecall$ and PSNR, which we report in \cref{fig:clevr_sa_n_slots_train_eval}.
    To be able to compare, we also train a large number of other slot-based models, including OSRT~\citep{sajjadiObjectSceneRepresentation2022} and SAVi~\citep{kipfConditionalObjectCentricLearning2022}, and train on other datasets such as MSN \citep{stelznerDecomposing3DScenes2021}, CATER \citep{girdharCATERDiagnosticDataset2020} and MOVi-A/C \citep{greffKubricScalableDataset2022}. These results are summarized in \cref{fig:clevr_msn_e_cater_movi_sa_savi_osrt_n_slots}.

\paragraph{During inference}
\label{sec:number_of_slots_clevr_cater_inference}
    Focusing on the effect of changing the number of slots during inference in \cref{fig:clevr_sa_n_slots_train_eval} (vertical black dotted line), it can be seen that when Slot Attention is trained with a certain number of slots $K_{\mathrm{train}}$ that is close the to minimally required number of slots to represent all objects (\ie $K_{\mathrm{opt}} \approx 11$), increasing the number of slots $K_{\mathrm{eval}}>K_{\mathrm{train}}$ during inference has little effect on both the $\ari$ and $\arecall$ and no effect at all on the $\aprecision$ (compare also \circled{8} to \circled{5} empirically).
    However, it can also be seen that decreasing the number of slots $K_{\mathrm{eval}} < K_{\mathrm{train}}$ mostly reduces $\aprecision$ and thereby also the $\ari$ score \circled{2}.
    Similarly, there is a substantial decrease in PSNR in this case.
    Taken together, this suggests a viable strategy for applying object-centric models to real-world data with unknown number of objects: Train the model on a curated dataset for which an upper bound on the number of objects is known and give the model access to substantially more slots at inference time, \eg, as many slots as objects might show up in the scene.
    Indeed, it was previously shown that slot-based models are able to generalize to scenes having additional \emph{objects} at inference time when enough slots are available, \eg \citet{greffMultiObjectRepresentationLearning2020a, dittadiGeneralizationRobustnessImplications2022}.

\paragraph{During training} \label{sec:number_of_slots_clevr_cater_training}
    If we consider Slot Attentions' behavior when varying the number of slots only during training, then it appears more sensitive to the specific choice of the number of slots $K_{\mathrm{train}}$ in \cref{fig:clevr_sa_n_slots_train_eval} (horizontal black dotted line). If the model is trained with substantially too few or many slots (\ie $K_{\mathrm{train}} \ll K_{\mathrm{opt}} \lor K_{\mathrm{train}} \gg K_{\mathrm{opt}}$) we observe low $\ari$ scores, indicating that the learned object decomposition is not aligned with the ground truth. If $K_{\mathrm{train}}$ is set too high the model yields a poor $\arecall$ while maintaining a high $\aprecision$, indicating that severe oversegmentation occurs \circled{6}.
    On the other end, if $K_{\mathrm{train}}$ is too low both the $\aprecision$ and $\arecall$ decrease, hinting at a combined phenomenon where the model merges some objects but splits others up too \circled{4}. Importantly, after training using a poor estimate of $K_{\mathrm{train}}$, the model's performance can not be recovered at inference time through a better estimate for $K_{\mathrm{eval}}$ (compare to \circled{5}).

    \begin{figure*}[tb]
        \centering
         \includegraphics[width=\textwidth]{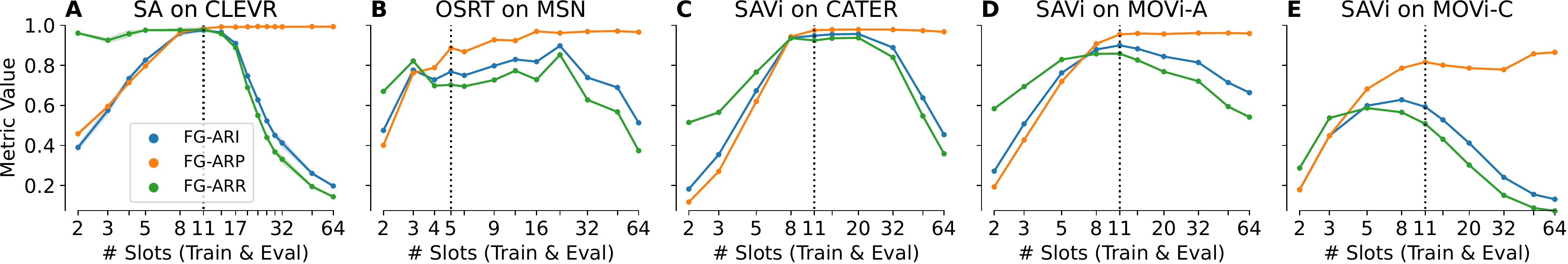}
         \caption{
         \textbf{Sensitivity of Slot-based Models on Various Datasets.}
          Characterization of the sensitivity of Slot Attention (SA) on CLEVR (\textbf{A}), the Object Scene Representation Transformer (OSRT) on MSN (\textbf{B}) and Slot Attention for Videos (SAVi) on CATER (\textbf{C}), MOVi-A/C (\textbf{D}, \textbf{E}) through $\fgari$ (\textcolor{ari}{blue}), $\fgaprecision$ (\textcolor{arp}{orange}) and $\fgarecall$ (\textcolor{arr}{green}). We see the same pattern for all models and datasets, namely, that there exists an intermediate region with all high $\ari$, $\aprecision$ and $\arecall$ values whereas too few or too many slots lead to a drop in either  $\arecall$ or $\aprecision$. All models used here use a random slot initialization. The dotted line at eleven (five for MSN) highlights the minimally required number of slots to represent the scene's background and the maximum number of possibly occurring objects.
         Note that the first plot (SA on CLEVR) corresponds to the bottom-left to top-right diagonal in \cref{fig:clevr_sa_n_slots_train_eval}.}
         \label{fig:clevr_msn_e_cater_movi_sa_savi_osrt_n_slots}
    \end{figure*}
    
\paragraph{During training and inference} \label{sec:number_of_slots_clevr_cater_training_inference}
    The models in the previous two paragraphs were tested differently from how they had been trained, and their behavior might be explained by this domain mismatch. Here we consider the effect of simultaneously changing the number of slots during training and inference time using $K_{\mathrm{train}} = K_{\mathrm{eval}}$.
   In addition to the Slot Attention models on CLEVR, we now also consider a variety of slot-based models --- Slot Attention, OSRT and SAVi --- on multiple datasets --- CLEVR, MSN, CATER and MOVi-A/C. %
    In \cref{fig:clevr_msn_e_cater_movi_sa_savi_osrt_n_slots} it can be seen how these models behave nearly identically across these datasets, and, in fact, resemble the behavior of the toy models performing over- and undersegmentation in \cref{fig:precsion_recall_ari_checkerboard}.
    Moreover, comparing \cref{fig:clevr_msn_e_cater_movi_sa_savi_osrt_n_slots}A to the horizontal black dotted line in \cref{fig:clevr_sa_n_slots_train_eval} we observe many similarities, suggesting that the number of slots used during training contributes more to the observed behavior.
    We further analyze the dependence of model performance on the number of ground-truth objects in \cref{sec:model_performance_number_of_objects}.

    When there are fewer slots than there are possible objects in the scenes (regime left of the vertical black dotted line in \cref{fig:clevr_msn_e_cater_movi_sa_savi_osrt_n_slots}), there does not exist a perfect solution anymore and the model has to start representing independent objects in shared slots. It can be seen that the $\ari$, $\aprecision$, and $\arecall$ all correlate with the number of slots.
    As both the $\aprecision$ and the $\arecall$ are typically low for models having too few slots, they are neither only performing over- nor undersegmentation (unlike on CLEVR, \cf \circled{1}). %
    Thus, we know that model is typically not just putting objects into shared slots, but also splits ground-truth objects up into parts and places (parts of) different objects in shared slots.
    Note how this distinction is not clear from only looking at the PSNR in \cref{fig:clevr_sa_n_slots_train_eval}.

    On the other hand, when the model has more slots than it potentially requires to represent all objects in a scene, multiple conceivable behaviors are possible:
    1) it could simply leave the additional slots unused, 2) use additional slots to split ground-truth objects into additional parts, either aligned with a human notion of objectness operating at a higher granularity level or arbitrary parts, or, 3) not bind slots to individual objects anymore but instead default to an (input independent) tessellation solution, \ie, it divides the image into (seemingly random) patches.
    From \cref{fig:clevr_msn_e_cater_movi_sa_savi_osrt_n_slots} (regime right of vertical black dotted line) we observe that the $\aprecision$ remains relatively constant at a near perfect score as we further increase the number of slots.
    At the same time, both the $\ari$ and $\arecall$ now decrease with an increasing number of slots, which, therefore, tells us that the model oversegments the image and splits ground-truth objects up into multiple parts \ie, scenario 2) (see also \circled{9}), which can not be concluded from looking at $\ari$ or PSNR alone. Note that while the model does oversegment, it does not appear to be splitting objects into arbitrary segments, but rather parts that are (partially) aligned with human notion of objectness (\eg, the sides of a cube are split into separate objects). 

    \paragraph{Correlation with PSNR}
    Returning to \cref{fig:clevr_sa_n_slots_train_eval}, it can be seen how the reconstruction quality (PNSR, \cref{fig:clevr_sa_n_slots_train_eval}D) is not strongly correlated with $\ari$ but rather with $\aprecision$ (Pearson correlation of $0.45$ \vs $0.88$).
    We also note that the PSNR appears symmetric (Pearson correlation with its transpose of $0.83$), meaning the number of slots during training and inference have similar effects on the reconstruction quality.
    Indeed, it is clear that PSNR is a poor criterion for model selection in slot-based models with regard to their ability to discover meaningful objects.

\subsection{Role of Training Objective}
    \begin{figure*}[tb]
        \centering
        \includegraphics[width=\textwidth]{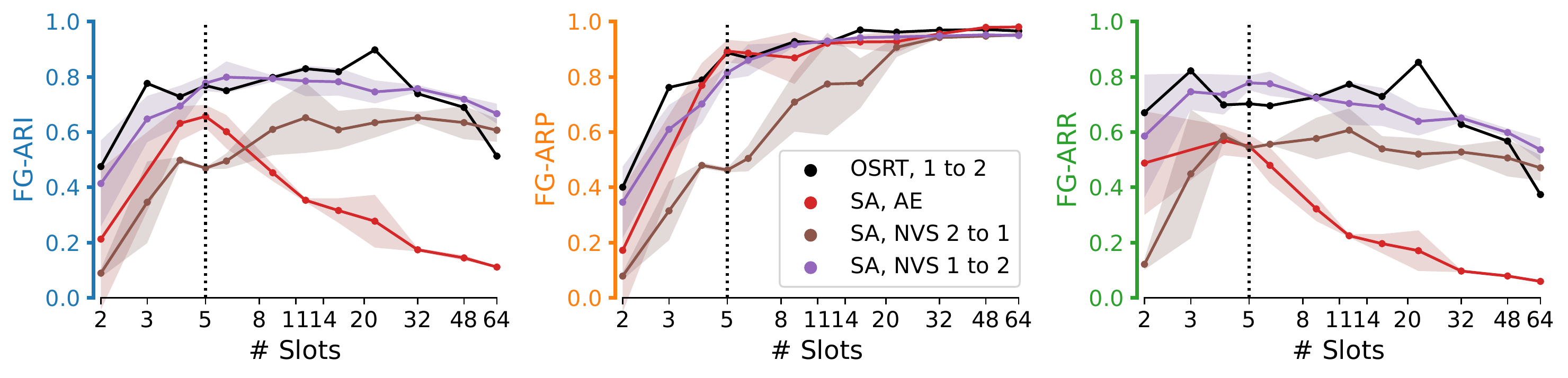}
        \caption{\textbf{SA and OSRT on MSN}. Comparison of the sensitivity of different slot-based architectures and training objectives to the number of slots: Object Scene Representation Transformer (OSRT), Slot Attention (SA) with an auto-encoder (AE) and novel-view-synthesis (NVS) objective. The suffix $k$ to $n$ means that the model predicts $n$ target from $k$ input views.
        }
        \label{fig:msn_e_osrt_sa_ae_nvs}
    \end{figure*}
    
    The analysis in \cref{sec:number_of_slots_clevr_cater_inference} revealed that there is less of a performance drop-off when increasing number of slots using $K_{\mathrm{train}} = K_{\mathrm{eval}}$ for OSRT compared to Slot Attention (compare \cref{fig:clevr_msn_e_cater_movi_sa_savi_osrt_n_slots}A \& B).  
    One key distinguishing factor between Slot Attention and OSRT is the novel-view synthesis objective used in the latter (as opposed to auto-encoding), whereby the model is trained to synthesize novel views given new camera poses.

    We investigate the role of novel-view synthesis further using the MultiShapeNet (MSN) dataset \citep{stelznerDecomposing3DScenes2021}, which contains scenes with three distinct views per scene for training purposes. We compare four types of models: the originally proposed OSRT model predicting two target views from one input view (OSRT, 1 to 2); Slot Attention with an auto-encoding objective on all three views (SA, AE); Slot Attention with a novel-view synthesis objective predicting one target from two input views (SA, NVS 2 to 1); and Slot Attention with a novel-view synthesis objective predicting two targets from a single input view (SA, NVS 1 to 2). The last model (SA, NVS 1 to 2) differs from the first model (OSRT, 1 to 2) only in terms of the model architecture. See \cref{sec:experiemntal_details} for details on the architectures used.
    
    As before, we measure $\ari$, $\aprecision$ and $\arecall$ while varying the number of slots using $K_{\mathrm{train}} = K_{\mathrm{eval}}$, which we report in \cref{fig:msn_e_osrt_sa_ae_nvs}.
    For the Slot Attention model with the AE objective we see the same trend as in \cref{fig:clevr_sa_n_slots_train_eval} for the CLEVR dataset: $\ari$ and $\arecall$ peak at the optimal number of slots and decline for too many slots, while $\aprecision$ increases with an increasing number of slots.
    Interestingly, however, the three models that are trained with the NVS objective are substantially less sensitive to the chosen number of model slots, regardless of the exact model architecture and the number of target or input views. While it is still true that the $\arecall$ decreases with an increasing number of slots, the decline is much less severe and only becomes noticeable at an extreme divergence between the chosen and optimal numbers of slots (48 \vs 5). Further, as we see almost the same performance for any of the models trained with NVS, we conclude that architectural nuances or the number of input/target views for the NVS objective are less important than the inductive bias induced by the objective itself.
    
    This supports our initial observation that a different training objective can induce a different object-binding behavior.
    While it is encouraging to see that this objective can increase stability across a larger range of hyperparameters, it is surprising that it appears to adopt the specific notion of objectness (and thus level of granularity) used in the ground truth, and is not inclined to use a more fine-grained object segmentation.%
    We hypothesize that this behavior is due to a stronger inductive bias by the NVS objective bias against oversegmentation as parts of objects are never independently varied, \ie, seen from a view other than the object they belong to. Therefore, the model does not gain an advantage by splitting the object up into parts.

\subsection{Steering Slots via Conditioning}

    In the absence of any further supervision or control signal, splitting up an image into objects is in principle ill-defined, since we can not expect the model to settle on the desired level of granularity of what constitutes an object. For example, should a tree be treated as a single object or as separate objects (\eg, its trunk, branches and leaves)? From a practical perspective, there is no clear answer as the relevant level of detail can depend on the downstream task to solve (\eg, counting trees \vs counting how many leaves each tree has). Thus, a model should have a flexible way to adjust its internal granularity level, ideally controlled by an additional input (\ie, conditioning) signal~\citep{greffBindingProblemArtificial2020}.
    
    \paragraph{Bounding Box Conditioning} As mentioned in \cref{sec:related_work}, the SAVi architecture~\citep{kipfConditionalObjectCentricLearning2022} allows uses such conditioning signals in the form of bounding-box annotations of objects in the first frame of the videos. This lets us evaluate how well bounding-box conditioning can be used to determine the granularity level internally assumed by the model and overcome the previously observed sensitivity to the number of slots. 
    A comparison of models without additional input information (\emph{unconditioned}) with models having access to the bounding boxes of objects in the first frame (\emph{conditioned}) for the MOVi-A/C \citep{greffKubricScalableDataset2022} datasets is displayed in \cref{fig:movi_ac_savi_n_slots}.
    Here, we see that conditioning is helpful: for MOVi-A it improves the $\arecall$ without affecting the $\aprecision$ and for the more complex MOVi-C dataset, it improves both $\arecall$ and $\aprecision$.
    
    Based on these findings, we evaluate whether additional information can further improve the performance: Instead of just using the bounding-box information in the first frame to condition the model on objects, we use the bounding-box information in the last frame as a regression signal (\emph{supervised}). Note that as SAVi is trained on small consecutive video snippets, this method effectively does not require more annotated data if videos are cut such that the last and first frame of the neighboring snippets are identical.
    This approach again improves performance for both datasets (\cref{fig:movi_ac_savi_n_slots}). Specifically, it stabilizes the $\arecall$ score substantially as we increase the number of slots beyond what is required by the ground-truth annotation.

    \begin{figure}[tb]
        \centering
        \includegraphics[width=\columnwidth]{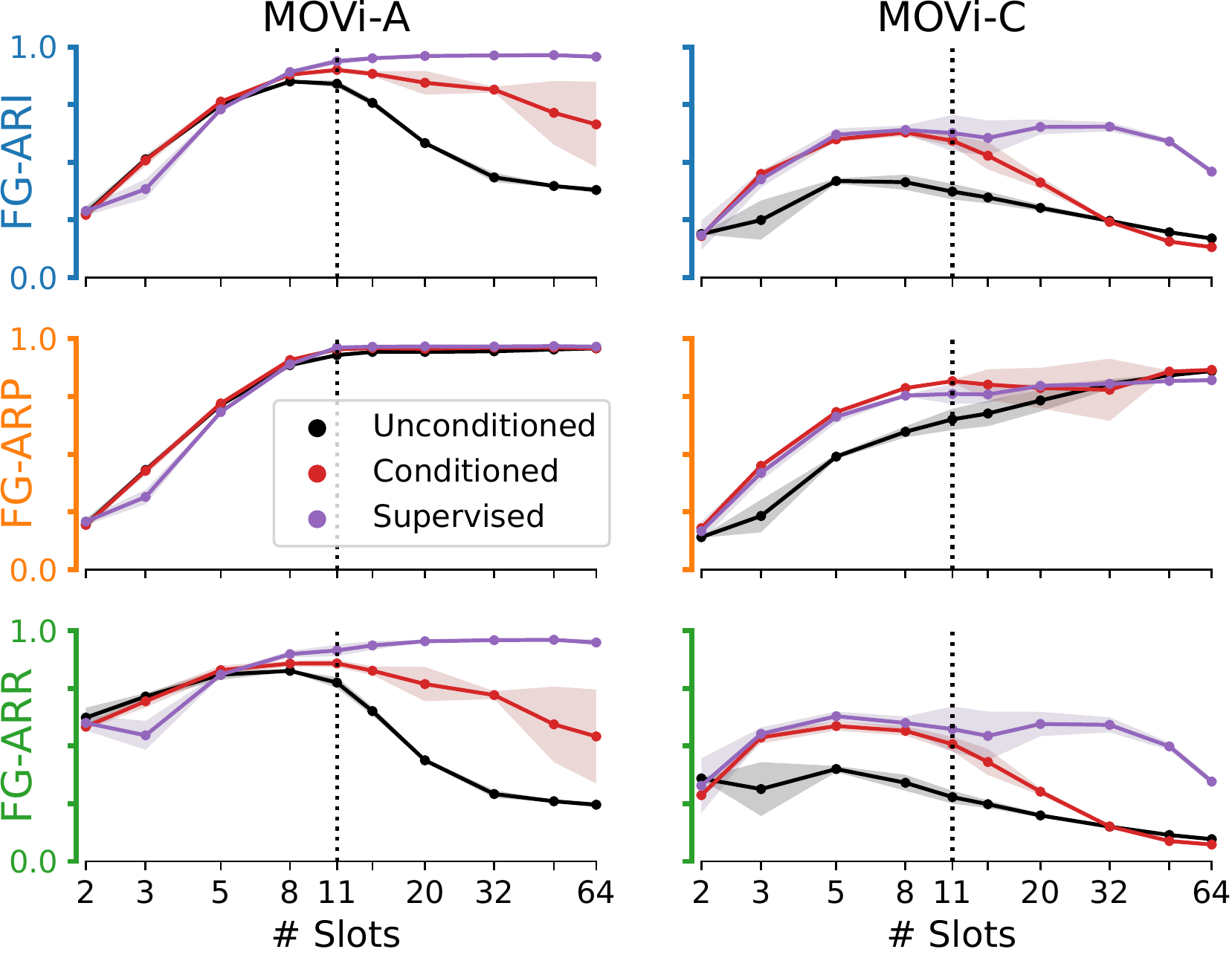}
        \caption{\textbf{SA on MOVi-A/C.} Sensitivity of Slot Attention for Video (SAVi) to the number of slots for different training variants: an unsupervised model trained on pixels only (unconditioned); an unsupervised model trained on pixels and the bounding-boxes of objects in the first frame (conditioned); the same as the conditioned model but with additional supervised bounding-box regression in the last frame.
        Scenes in both datasets contain up to 10 objects, meaning that 11 slots (dotted line) are sufficient to represent all objects and the scene's background}
        \label{fig:movi_ac_savi_n_slots}
    \end{figure}

\paragraph{Influence of Initialization of Unconditioned Slots}
    When not using any conditioning information, one still needs to initialize the slots used by Slot Attention. Common choices here are to use either a random initialization (\eg, sampling from a Gaussian) or a learned initial value \citep{locatelloObjectCentricLearningSlot2020}.
    As the previous section showed that conditioning works well to initialize slots, we now set out to test the other two initialization schemes.
    In \cref{fig:msn_e_clevr_sa_nvs_rnd_vs_learned} we compare a random with a learned initialization for SA trained with an AE and NVS objective on MSN. Noticeably, for the AE objective, there is no difference between the two initializations. For the NVS objective, however, the random initialization outperforms the learned one in all scores. Therefore, in this setting, if no conditioning information can be used, it appears to be beneficial to randomly initialize slots. For more results see~\cref{sec:extended_results}: \cref{fig:clevr_sa_nvs_rnd_vs_learned} for observations on the AE objective on CLEVR and \cref{fig:msn_e_sa_nvs_rnd_vs_learned_boqsa} for observations on a modification of slot attention on MSN-E.
    
    \begin{figure}[tbh]
        \centering
        \begin{subfigure}[b]{\linewidth}
            \includegraphics[width=\textwidth]{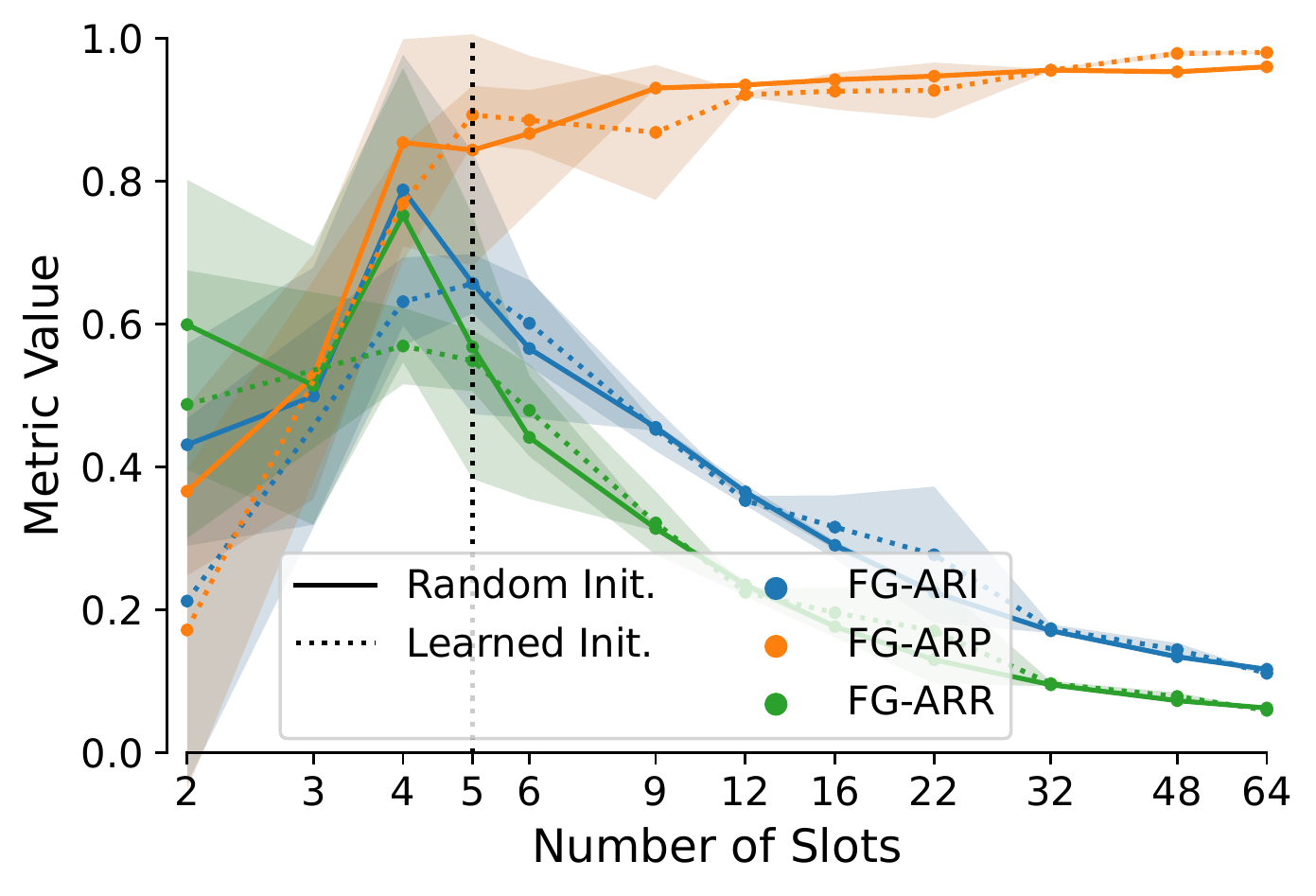}
            \caption{Slot Attention (AE) on MSN.}
            \label{fig:msn_e_sa_ae_rnd_vs_learned}
        \end{subfigure}
        \begin{subfigure}[b]{\linewidth}
            \includegraphics[width=\textwidth]{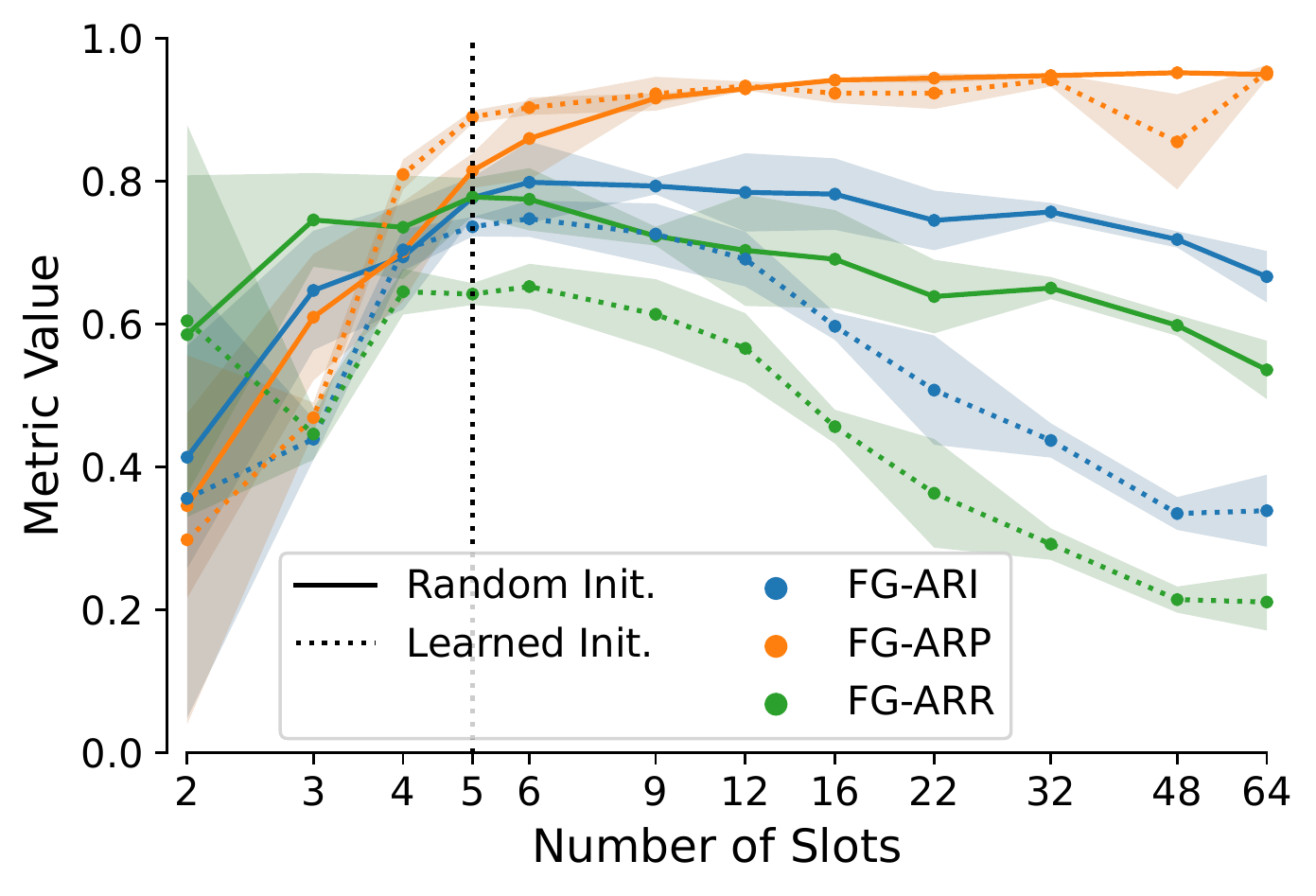}
            \caption{Slot Attention (NVS) on MSN.}
            \label{fig:msn_e_sa_nvs_rnd_vs_learned}
        \end{subfigure}
        \caption{
        \textbf{Influence of Slot Initialization.}
        Comparison of the effect of slot initialization in Slot Attention in a purely unsupervised setting without conditioning for the auto-encoding (AE) and novel-view-synthesis (NVS) objective on MSN, measured by $\fgari$ (\textcolor{ari}{blue}), $\fgaprecision$ (\textcolor{arp}{orange}), $\fgarecall$ (\textcolor{arr}{green}).
        Random initialization leads to substantially higher FG-ARR (and therefore also FG-ARI) when using too many slots for the NVS objective, while there is no notable effect for the AE objective. For results on the AE objective on CLEVR, see \cref{fig:clevr_sa_nvs_rnd_vs_learned}.
        }
        \label{fig:msn_e_clevr_sa_nvs_rnd_vs_learned}
    \end{figure}

\section{Conclusion}
    In this work, we investigated how the number of slots in slot-based object-centric models affects the learned representations and the implicit notion of objectness. To meet this end, we propose to use two new metrics, Adjusted Rand Precision and Adjusted Rand Recall, for obtaining a more detailed understanding in the behavior of object-centric models.  Specifically, we demonstrate that while PSNR or ARI are not sufficient for distinguishing failure cases (\eg, over- and undersegmentation), the proposed metrics are.
    
    Most importantly, we found that during training, adjusting this number has a crucial effect on the learned representations: If either too high or too low values are used, the model will learn to partition the scene in undesired ways. This behavior cannot be mitigated during inference by just adjusting the number of slots again. However, if the model was trained with a reasonably chosen number of slots, it remains mostly insensitive to the number of slots during inference. Despite this, we argue that this behavior is problematic as in real-world datasets the actual number of objects might vary a lot across scenes and cannot be tightly bound. 
    
    In an attempt to solve this problem, we demonstrated that by supplying the model with more information --- either in the form of conditioning information or weak supervision --- one can induce a bias toward the intended notion of objectness and partially ease the aforementioned issue.
    Finally, this work showed that different training objectives seem to create different inductive biases yielding models with different sensitivities towards the number of slots.
    
    Although all models investigated in this work are slot-based methods powered by Slot Attention, the methodology presented here is not limited to these methods. Instead, it applies to any object-centric model that either explicitly or implicitly produces instance segmentation masks.

    While we found that models trained and evaluated with too many slots use their additional slots to split up ground-truth objects into parts, it remains to be tested whether the model splits  objects into fine-grained components aligned with the human notion of objects/parts or into arbitary parts. To answer this, one could evaluate the performance of models on datasets that have segmentation annotations for different granularity levels.
    We leave this for future work.
    While this work explored the influence of model hyperparameters on the learned notion of objectness, it remains to be explored how properties of the dataset influence this. For one, this includes how the number of objects over different scenes influences the model, and for another, how independent objects need to be for the model to recognize them as separate instances.
    Finally, for practical purposes, it will be interesting to develop models allowing for a controllable notion of objectness, depending on the task at hand.

\newcommand{\RSZ}{RSZ\xspace}
\newcommand{\SvS}{SvS\xspace}
\newcommand{\KG}{KG\xspace}
\newcommand{\TK}{TK\xspace}
\newcommand{\MSMS}{MSMS\xspace}

\section*{Author Contributions}
    \RSZ led the project and was responsible for implementation, metric development, prototyping, experimentation, model analysis and writing.
    \SvS and \MSMS worked on the experimental design. \MSMS initiated the precision-recall evaluation.
    \KG co-led the project and was responsible for scoping, writing, some figures and code reviews.
    Both \KG and \TK hosted the internship resulting in this project.
    \SvS, \MSMS, \TK and \KG advised the project and reviewed the experiments' implementation.
    All authors contributed to the final version of the manuscript.

\section*{Acknowledgments}
    We thank Aravindh Mahendran, Gamaleldin Elsayed and Michael C. Mozer for insightful and stimulating discussions over the course of this project and Etienne Pot for technical support. Additionally, we thank Alexey Dosovitskiy for feedback on the manuscript.

\bibliography{references}

\begin{thebibliography}{43}
\providecommand{\natexlab}[1]{#1}
\providecommand{\url}[1]{\texttt{#1}}
\expandafter\ifx\csname urlstyle\endcsname\relax
  \providecommand{\doi}[1]{doi: #1}\else
  \providecommand{\doi}{doi: \begingroup \urlstyle{rm}\Url}\fi

\bibitem[Albatineh et~al.(2006)Albatineh, Niewiadomska-Bugaj, and
  Mihalko]{albatineh2006similarity}
Albatineh, A.~N., Niewiadomska-Bugaj, M., and Mihalko, D.
\newblock On similarity indices and correction for chance agreement.
\newblock \emph{Journal of Classification}, 23\penalty0 (2):\penalty0 301--313,
  2006.

\bibitem[Arbel{\'a}ez et~al.(2011)Arbel{\'a}ez, Maire, Fowlkes, and
  Malik]{arbelaezContourDetectionHierarchical2011}
Arbel{\'a}ez, P., Maire, M., Fowlkes, C., and Malik, J.
\newblock Contour {{Detection}} and {{Hierarchical Image Segmentation}}.
\newblock \emph{IEEE Transactions on Pattern Analysis and Machine
  Intelligence}, 33\penalty0 (5):\penalty0 898--916, May 2011.
\newblock ISSN 1939-3539.
\newblock \doi{10.1109/TPAMI.2010.161}.

\bibitem[{Arganda-Carreras} et~al.(2015){Arganda-Carreras}, Turaga, Berger,
  Cire{\c s}an, Giusti, Gambardella, Schmidhuber, Laptev, Dwivedi, Buhmann,
  Liu, Seyedhosseini, Tasdizen, Kamentsky, Burget, Uher, Tan, Sun, Pham, Bas,
  Uzunbas, Cardona, Schindelin, and
  Seung]{arganda-carrerasCrowdsourcingCreationImage2015}
{Arganda-Carreras}, I., Turaga, S.~C., Berger, D.~R., Cire{\c s}an, D., Giusti,
  A., Gambardella, L.~M., Schmidhuber, J., Laptev, D., Dwivedi, S., Buhmann,
  J.~M., Liu, T., Seyedhosseini, M., Tasdizen, T., Kamentsky, L., Burget, R.,
  Uher, V., Tan, X., Sun, C., Pham, T.~D., Bas, E., Uzunbas, M.~G., Cardona,
  A., Schindelin, J., and Seung, H.~S.
\newblock Crowdsourcing the creation of image segmentation algorithms for
  connectomics.
\newblock \emph{Frontiers in Neuroanatomy}, 9, 2015.
\newblock ISSN 1662-5129.

\bibitem[Bahdanau et~al.(2015)Bahdanau, Cho, and
  Bengio]{bahdanauNeuralMachineTranslation2016}
Bahdanau, D., Cho, K., and Bengio, Y.
\newblock Neural machine translation by jointly learning to align and
  translate.
\newblock In \emph{{ICLR}}, 2015.

\bibitem[Bear et~al.(2020)Bear, Fan, Mrowca, Li, Alter, Nayebi, Schwartz,
  Fei{-}Fei, Wu, Tenenbaum, and Yamins]{bearLearningPhysicalGraph2020}
Bear, D., Fan, C., Mrowca, D., Li, Y., Alter, S., Nayebi, A., Schwartz, J.,
  Fei{-}Fei, L., Wu, J., Tenenbaum, J., and Yamins, D. L.~K.
\newblock Learning physical graph representations from visual scenes.
\newblock In \emph{NeurIPS}, 2020.

\bibitem[Burgess et~al.(2019)Burgess, Matthey, Watters, Kabra, Higgins,
  Botvinick, and Lerchner]{burgessMONetUnsupervisedScene2019}
Burgess, C.~P., Matthey, L., Watters, N., Kabra, R., Higgins, I., Botvinick,
  M., and Lerchner, A.
\newblock {{MONet}}: {{Unsupervised Scene Decomposition}} and
  {{Representation}}, January 2019.

\bibitem[Chang et~al.(2022)Chang, Griffiths, and Levine]{chang2022object}
Chang, M., Griffiths, T., and Levine, S.
\newblock Object representations as fixed points: Training iterative refinement
  algorithms with implicit differentiation.
\newblock \emph{Advances in Neural Information Processing Systems},
  35:\penalty0 32694--32708, 2022.

\bibitem[Dittadi et~al.(2022)Dittadi, Papa, Vita, Sch{\"{o}}lkopf, Winther, and
  Locatello]{dittadiGeneralizationRobustnessImplications2022}
Dittadi, A., Papa, S.~S., Vita, M.~D., Sch{\"{o}}lkopf, B., Winther, O., and
  Locatello, F.
\newblock Generalization and robustness implications in object-centric
  learning.
\newblock In \emph{{ICML}}, volume 162 of \emph{Proceedings of Machine Learning
  Research}, pp.\  5221--5285. {PMLR}, 2022.

\bibitem[Elsayed et~al.(2022)Elsayed, Mahendran, {van Steenkiste}, Greff,
  Mozer, and Kipf]{elsayedSAViEndtoEndObjectCentric2022a}
Elsayed, G.~F., Mahendran, A., {van Steenkiste}, S., Greff, K., Mozer, M.~C.,
  and Kipf, T.
\newblock {{SAVi}}++: {{Towards End-to-End Object-Centric Learning}} from
  {{Real-World Videos}}, December 2022.

\bibitem[Engelcke et~al.(2020)Engelcke, Kosiorek, Jones, and
  Posner]{engelckeGENESISGenerativeScene2020}
Engelcke, M., Kosiorek, A.~R., Jones, O.~P., and Posner, I.
\newblock {{GENESIS}}: {{Generative Scene Inference}} and {{Sampling}} with
  {{Object-Centric Latent Representations}}, November 2020.

\bibitem[Engelcke et~al.(2022)Engelcke, Jones, and
  Posner]{engelckeGENESISV2InferringUnordered2022}
Engelcke, M., Jones, O.~P., and Posner, I.
\newblock {{GENESIS-V2}}: {{Inferring Unordered Object Representations}}
  without {{Iterative Refinement}}, January 2022.

\bibitem[Girdhar \& Ramanan(2020)Girdhar and
  Ramanan]{girdharCATERDiagnosticDataset2020}
Girdhar, R. and Ramanan, D.
\newblock {{CATER}}: {{A}} diagnostic dataset for {{Compositional Actions}} and
  {{TEmporal Reasoning}}, April 2020.

\bibitem[Gong \& Shi(2011)Gong and Shi]{gong2011conditional}
Gong, H. and Shi, J.
\newblock Conditional {{Entropies}} as {{Over-Segmentation}} and
  {{Under-Segmentation Metrics}} for {{Multi-Part Image Segmentation}}.
\newblock Technical report, {University of Pennsylvania}, 2011.

\bibitem[Greff et~al.(2016{\natexlab{a}})Greff, Rasmus, Berglund, Hao, Valpola,
  and Schmidhuber]{greffTaggerDeepUnsupervised2016}
Greff, K., Rasmus, A., Berglund, M., Hao, T.~H., Valpola, H., and Schmidhuber,
  J.
\newblock Tagger: Deep unsupervised perceptual grouping.
\newblock In \emph{{NIPS}}, pp.\  4484--4492, 2016{\natexlab{a}}.

\bibitem[Greff et~al.(2016{\natexlab{b}})Greff, Srivastava, and
  Schmidhuber]{greffBindingReconstructionClustering2016}
Greff, K., Srivastava, R.~K., and Schmidhuber, J.
\newblock Binding via {{Reconstruction Clustering}}, January
  2016{\natexlab{b}}.

\bibitem[Greff et~al.(2019)Greff, Kaufman, Kabra, Watters, Burgess, Zoran,
  Matthey, Botvinick, and
  Lerchner]{greffMultiObjectRepresentationLearning2020a}
Greff, K., Kaufman, R.~L., Kabra, R., Watters, N., Burgess, C., Zoran, D.,
  Matthey, L., Botvinick, M.~M., and Lerchner, A.
\newblock Multi-object representation learning with iterative variational
  inference.
\newblock In \emph{{ICML}}, volume~97 of \emph{Proceedings of Machine Learning
  Research}, pp.\  2424--2433. {PMLR}, 2019.

\bibitem[Greff et~al.(2020)Greff, {van Steenkiste}, and
  Schmidhuber]{greffBindingProblemArtificial2020}
Greff, K., {van Steenkiste}, S., and Schmidhuber, J.
\newblock On the {{Binding Problem}} in {{Artificial Neural Networks}},
  December 2020.

\bibitem[Greff et~al.(2022)Greff, Belletti, Beyer, Doersch, Du, Duckworth,
  Fleet, Gnanapragasam, Golemo, Herrmann, Kipf, Kundu, Lagun, Laradji,
  {Hsueh-Ti}, Liu, Meyer, Miao, Nowrouzezahrai, Oztireli, Pot, Radwan, Rebain,
  Sabour, Sajjadi, Sela, Sitzmann, Stone, Sun, Vora, Wang, Wu, Yi, Zhong, and
  Tagliasacchi]{greffKubricScalableDataset2022}
Greff, K., Belletti, F., Beyer, L., Doersch, C., Du, Y., Duckworth, D., Fleet,
  D.~J., Gnanapragasam, D., Golemo, F., Herrmann, C., Kipf, T., Kundu, A.,
  Lagun, D., Laradji, I., {Hsueh-Ti}, Liu, Meyer, H., Miao, Y., Nowrouzezahrai,
  D., Oztireli, C., Pot, E., Radwan, N., Rebain, D., Sabour, S., Sajjadi, M.
  S.~M., Sela, M., Sitzmann, V., Stone, A., Sun, D., Vora, S., Wang, Z., Wu,
  T., Yi, K.~M., Zhong, F., and Tagliasacchi, A.
\newblock Kubric: {{A}} scalable dataset generator, March 2022.

\bibitem[Hariharan et~al.(2014)Hariharan, Arbel{\'a}ez, Girshick, and
  Malik]{hariharanSimultaneousDetectionSegmentation2014}
Hariharan, B., Arbel{\'a}ez, P., Girshick, R., and Malik, J.
\newblock Simultaneous {{Detection}} and {{Segmentation}}.
\newblock In Fleet, D., Pajdla, T., Schiele, B., and Tuytelaars, T. (eds.),
  \emph{Computer {{Vision}} \textendash{} {{ECCV}} 2014}, Lecture {{Notes}} in
  {{Computer Science}}, pp.\  297--312, {Cham}, 2014. {Springer International
  Publishing}.
\newblock ISBN 978-3-319-10584-0.
\newblock \doi{10.1007/978-3-319-10584-0_20}.

\bibitem[He et~al.(2016)He, Zhang, Ren, and Sun]{heDeepResidualLearning2015}
He, K., Zhang, X., Ren, S., and Sun, J.
\newblock Deep residual learning for image recognition.
\newblock In \emph{{CVPR}}, pp.\  770--778. {IEEE} Computer Society, 2016.

\bibitem[Hubert \& Arabie(1985)Hubert and
  Arabie]{hubertComparingPartitions1985}
Hubert, L. and Arabie, P.
\newblock Comparing partitions.
\newblock \emph{Journal of Classification}, 2\penalty0 (1):\penalty0 193--218,
  December 1985.
\newblock ISSN 1432-1343.
\newblock \doi{10.1007/BF01908075}.

\bibitem[Jaccard(1901)]{jaccardDistributionFloreAlpine1901}
Jaccard, P.
\newblock Distribution de la flore alpine dans le {{Bassin}} des {{Dranses}} et
  dans quelques regions voisines.
\newblock \emph{Bull Soc Vaudoise Sci Nat}, 37:\penalty0 241--272, 1901.

\bibitem[Jia et~al.(2023)Jia, Liu, and Huang]{jiaimproving}
Jia, B., Liu, Y., and Huang, S.
\newblock Improving object-centric learning with query optimization.
\newblock In \emph{{ICLR}}. OpenReview.net, 2023.

\bibitem[Johnson et~al.(2016)Johnson, Hariharan, {van der Maaten}, {Fei-Fei},
  Zitnick, and Girshick]{johnsonCLEVRDiagnosticDataset2016}
Johnson, J., Hariharan, B., {van der Maaten}, L., {Fei-Fei}, L., Zitnick,
  C.~L., and Girshick, R.
\newblock {{CLEVR}}: {{A Diagnostic Dataset}} for {{Compositional Language}}
  and {{Elementary Visual Reasoning}}, December 2016.

\bibitem[Johnson-Laird(2010)]{johnson2010mental}
Johnson-Laird, P.~N.
\newblock Mental models and human reasoning.
\newblock \emph{Proceedings of the National Academy of Sciences}, 107\penalty0
  (43):\penalty0 18243--18250, 2010.

\bibitem[Karazija et~al.(2021)Karazija, Laina, and
  Rupprecht]{karazijaClevrTexTextureRichBenchmark2021}
Karazija, L., Laina, I., and Rupprecht, C.
\newblock {{ClevrTex}}: {{A Texture-Rich Benchmark}} for {{Unsupervised
  Multi-Object Segmentation}}, November 2021.

\bibitem[Kingma \& Ba(2015)Kingma and Ba]{kingmaAdamMethodStochastic2017}
Kingma, D.~P. and Ba, J.
\newblock Adam: {A} method for stochastic optimization.
\newblock In \emph{{ICLR} (Poster)}, 2015.

\bibitem[Kipf et~al.(2022)Kipf, Elsayed, Mahendran, Stone, Sabour, Heigold,
  Jonschkowski, Dosovitskiy, and
  Greff]{kipfConditionalObjectCentricLearning2022}
Kipf, T., Elsayed, G.~F., Mahendran, A., Stone, A., Sabour, S., Heigold, G.,
  Jonschkowski, R., Dosovitskiy, A., and Greff, K.
\newblock Conditional object-centric learning from video.
\newblock In \emph{{ICLR}}. OpenReview.net, 2022.

\bibitem[Locatello et~al.(2020)Locatello, Weissenborn, Unterthiner, Mahendran,
  Heigold, Uszkoreit, Dosovitskiy, and
  Kipf]{locatelloObjectCentricLearningSlot2020}
Locatello, F., Weissenborn, D., Unterthiner, T., Mahendran, A., Heigold, G.,
  Uszkoreit, J., Dosovitskiy, A., and Kipf, T.
\newblock Object-centric learning with slot attention.
\newblock In \emph{NeurIPS}, 2020.

\bibitem[Luong et~al.(2015)Luong, Pham, and
  Manning]{luongEffectiveApproachesAttentionbased2015}
Luong, M.-T., Pham, H., and Manning, C.~D.
\newblock Effective {{Approaches}} to {{Attention-based Neural Machine
  Translation}}, September 2015.

\bibitem[MacQueen(1967)]{MacQueen1967}
MacQueen, J.~B.
\newblock Some methods for classification and analysis of {{MultiVariate}}
  observations.
\newblock In Cam, L. M.~L. and Neyman, J. (eds.), \emph{Proc. of the Fifth
  Berkeley Symposium on Mathematical Statistics and Probability}, volume~1,
  pp.\  281--297. {University of California Press}, 1967.

\bibitem[Martin et~al.(2004)Martin, Fowlkes, and
  Malik]{martinLearningDetectNatural2004}
Martin, D., Fowlkes, C., and Malik, J.
\newblock Learning to detect natural image boundaries using local brightness,
  color, and texture cues.
\newblock \emph{IEEE Transactions on Pattern Analysis and Machine
  Intelligence}, 26\penalty0 (5):\penalty0 530--549, May 2004.
\newblock ISSN 1939-3539.
\newblock \doi{10.1109/TPAMI.2004.1273918}.

\bibitem[Rand(1971)]{randObjectiveCriteriaEvaluation1971}
Rand, W.~M.
\newblock Objective {{Criteria}} for the {{Evaluation}} of {{Clustering
  Methods}}.
\newblock \emph{Journal of the American Statistical Association}, 66\penalty0
  (336):\penalty0 846--850, 1971.
\newblock ISSN 0162-1459.
\newblock \doi{10.2307/2284239}.

\bibitem[Rijsbergen(1979)]{rijsbergenInformationRetrieval1979}
Rijsbergen, C. J.~V.
\newblock \emph{Information {{Retrieval}}}.
\newblock {Butterworth-Heinemann}, {USA}, second edition, 1979.
\newblock ISBN 978-0-408-70929-3.

\bibitem[Sajjadi et~al.(2022{\natexlab{a}})Sajjadi, Duckworth, Mahendran, van
  Steenkiste, Pavetic, Lucic, Guibas, Greff, and
  Kipf]{sajjadiObjectSceneRepresentation2022}
Sajjadi, M. S.~M., Duckworth, D., Mahendran, A., van Steenkiste, S., Pavetic,
  F., Lucic, M., Guibas, L.~J., Greff, K., and Kipf, T.
\newblock Object scene representation transformer.
\newblock In \emph{NeurIPS}, 2022{\natexlab{a}}.

\bibitem[Sajjadi et~al.(2022{\natexlab{b}})Sajjadi, Meyer, Pot, Bergmann,
  Greff, Radwan, Vora, Lucic, Duckworth, Dosovitskiy, Uszkoreit, Funkhouser,
  and Tagliasacchi]{sajjadiSceneRepresentationTransformer2022}
Sajjadi, M. S.~M., Meyer, H., Pot, E., Bergmann, U., Greff, K., Radwan, N.,
  Vora, S., Lucic, M., Duckworth, D., Dosovitskiy, A., Uszkoreit, J.,
  Funkhouser, T.~A., and Tagliasacchi, A.
\newblock Scene representation transformer: Geometry-free novel view synthesis
  through set-latent scene representations.
\newblock In \emph{{CVPR}}, pp.\  6219--6228. {IEEE}, 2022{\natexlab{b}}.

\bibitem[Spelke \& Kinzler(2007)Spelke and Kinzler]{spelke2007core}
Spelke, E.~S. and Kinzler, K.~D.
\newblock Core knowledge.
\newblock \emph{Developmental science}, 10\penalty0 (1):\penalty0 89--96, 2007.

\bibitem[Stelzner et~al.(2021)Stelzner, Kersting, and
  Kosiorek]{stelznerDecomposing3DScenes2021}
Stelzner, K., Kersting, K., and Kosiorek, A.~R.
\newblock Decomposing {{3D Scenes}} into {{Objects}} via {{Unsupervised Volume
  Segmentation}}, April 2021.

\bibitem[Vaswani et~al.(2017)Vaswani, Shazeer, Parmar, Uszkoreit, Jones, Gomez,
  Kaiser, and Polosukhin]{vaswaniAttentionAllYou2017}
Vaswani, A., Shazeer, N., Parmar, N., Uszkoreit, J., Jones, L., Gomez, A.~N.,
  Kaiser, L., and Polosukhin, I.
\newblock Attention {{Is All You Need}}, December 2017.

\bibitem[Vinh et~al.(2010)Vinh, Epps, and
  Bailey]{vinhInformationTheoreticMeasures2010}
Vinh, N.~X., Epps, J., and Bailey, J.
\newblock Information {{Theoretic Measures}} for {{Clusterings Comparison}}:
  {{Variants}}, {{Properties}}, {{Normalization}} and {{Correction}} for
  {{Chance}}.
\newblock \emph{Journal of Machine Learning Research}, 11\penalty0
  (95):\penalty0 2837--2854, 2010.
\newblock ISSN 1533-7928.

\bibitem[Wallace(1983)]{wallaceMethodComparingTwo1983}
Wallace, D.~L.
\newblock A {{Method}} for {{Comparing Two Hierarchical Clusterings}}:
  {{Comment}}.
\newblock \emph{Journal of the American Statistical Association}, 78\penalty0
  (383):\penalty0 569--576, 1983.
\newblock ISSN 0162-1459.
\newblock \doi{10.2307/2288118}.

\bibitem[Watters et~al.(2019)Watters, Matthey, Burgess, and
  Lerchner]{watters2019spatial}
Watters, N., Matthey, L., Burgess, C.~P., and Lerchner, A.
\newblock Spatial broadcast decoder: A simple architecture for learning
  disentangled representations in vaes.
\newblock \emph{arXiv preprint arXiv:1901.07017}, 2019.

\bibitem[Yuan et~al.(2022)Yuan, Chen, Li, and
  Xue]{yuanCompositionalSceneRepresentation2022a}
Yuan, J., Chen, T., Li, B., and Xue, X.
\newblock Compositional {{Scene Representation Learning}} via
  {{Reconstruction}}: {{A Survey}}, June 2022.

\end{thebibliography}
\bibliographystyle{icml2023}

\newpage
\appendix
\onecolumn
\section{Definition and Properties of Rand Precision and Recall} \label{appx:definition_properties_precision_recall}

    Let $\bX, \bY$ be two segmentation maps with each up to $I$ and $J$ classes, respectively. Let $m \in \mathbb{Z}^{I \times J}$ denote a matching matrix for $\bX, \bY$, i.e. $m_{ij}$ is the number of pixels that are segmented as class $i$ in $\bX$ and class $j$ in $\bY$. Further, let $m = \sum\limits_{i=1}^I\sum\limits_{j=1}^J m_{ij}$ be the total number of pixels being segmented.

    \begin{definition}[Rand Precision and Recall] \label{def:naive_precision_recall}
        Let $\bX, \bY \in \mathbb{Z}^{N}$ be two segmentation maps and let $\mathcal{S} = \{\, 0, \ldots, N] \,\} $ denote the set of all pixel indices. The rand precision ($\precision: \mathbb{Z}^{N} \times \mathbb{Z}^{N} \to [0, 1]$) and rand recall ($\recall: \mathbb{Z}^{N} \times \mathbb{Z}^{N} \to [0, 1]$) are defined as:
        \begin{align} 
            \precision(\bX, \bY) &= \frac{\sum\limits_{i, j \in \Ss} \delta_{\bX_i, \bX_j} \delta_{\bY_i, \bY_j}}{\sum\limits_{i, j \in \Ss} \delta_{\bX_i, \bX_j} \delta_{\bY_i, \bY_j} + \sum\limits_{i, j \in \Ss} (1 - \delta_{\bX_i, \bX_j}) \delta_{\bY_i, \bY_j}} \label{eq:naive_def_precision} \\
            \recall(\bX, \bY) &= \frac{\sum\limits_{i, j \in \Ss} \delta_{\bX_i, \bX_j} \delta_{\bY_i, \bY_j}}{\sum\limits_{i, j \in \Ss} \delta_{\bX_i, \bX_j} \delta_{\bY_i, \bY_j} + \sum\limits_{i, j \in \Ss} \delta_{\bX_i, \bX_j} (1 - \delta_{\bY_i, \bY_j})} \label{eq:naive_def_recall}.
        \end{align}
        While the rand precision $\precision(\bX, \bY)$ measures for how many (of all possible) tuples of pixels for which $\bY$ is identical, $\bX$ is also identical, the rand recall $\recall(\bX, \bY)$ measures for how many (of all possible) tuples of pixels for which $\bX$ is identical, $\bY$ is also identical.
    \end{definition}
    
    Note that these metrics have been proposed before in the context of measuring the general similarity of clusterings \citep{wallaceMethodComparingTwo1983}. Here, however, we propose to leverage both these metrics for getting a more fine-grained insight into the behavior of segmentation models: Do they perform undersegmentation and merge (parts of) unrelated objects or do they perform oversegmentation and split objects up in parts?

    To make the precision and recall metrics more comparable to the commonly used Adjusted Rand Index (ARI), we need to adjust them, too: We correct them for chance agreement, such that we normalize the metrics by the values a random segmentation would yield. To do this, we first introduce an alternative but equal way to define the two new metrics:

    \begin{definition}[Alternative Definition of Rand Precision and Recall.]  \label{def:simple_precision_recall}
        Let $\bX, \bY$ be two segmentation maps with each up to $I$ and $J$ classes, respectively. Let $m \in \mathbb{Z}^{I \times J}$ denote a matching matrix for $\bX, \bY$, i.e. $m_{ij}$ is the number of pixels that are segmented as class $i$ in $\bX$ and class $j$ in $\bY$. Further, let $m = \sum_{i=1}^I\sum_{j=1}^J m_{ij}$ be the total number of pixels being segmented.
        We define the rand precision ($\precision: \mathbb{Z}^{N} \times \mathbb{Z}^{N} \to [0, 1]$) and rand recall ($\recall: \mathbb{Z}^{N} \times \mathbb{Z}^{N} \to [0, 1]$) as:
        \begin{align}
            \precision(\bX, \bY) &= \alpha_\textrm{Precision} + \beta_\textrm{Precision} \sum\limits_{i=1}^I\sum\limits_{j=1}^J m_{ij}^2 \textrm{ with } \alpha_\textrm{Precision} = -\frac{m}{Q}, \beta_\textrm{Precision} = \frac{1}{Q}, \label{eq:simple_def_precision} \\
            \recall(\bX, \bY) &= \alpha_\textrm{Recall} + \beta_\textrm{Recall} \sum\limits_{i=1}^I\sum\limits_{j=1}^J m_{ij}^2 \textrm{ with } \alpha_\textrm{Recall} = -\frac{m}{P}, \beta_\textrm{Recall} = \frac{1}{P} \label{eq:simple_def_recall} ,
        \end{align}
        with $P = \sum_{i=1}^I m_{i+}^2 - m = \sum_{i=1}^I (\sum_{j=1}^J m_{ij})^2 - m$ and $Q = \sum_{j=1}^J m_{+j}^2 - m = \sum_{j=1}^J (\sum_{i=1}^I m_{ij})^2 - m$. 
    \end{definition}
    
    \begin{prop}[Equality of \cref{def:naive_precision_recall} and \cref{def:simple_precision_recall}]
        The definitions \cref{def:naive_precision_recall} and \cref{def:simple_precision_recall} are consistent and define the same precision and recall functions.
    \end{prop}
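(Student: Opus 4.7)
The plan is to reduce both Kronecker-delta sums in \cref{def:naive_precision_recall} to pair-counts expressed through the matching matrix $m$, and then to check that the ratios in Def.~1 algebraically match the affine expressions in Def.~2. Throughout I would interpret $\sum_{i,j\in\Ss}$ as a sum over pairs of \emph{distinct} indices (equivalently, ordered pairs with $i\neq j$, or unordered pairs $\{i,j\}$; either choice produces the same ratio, since the combinatorial factor cancels between numerator and denominator). This convention is essential: including $i=j$ would produce $(\sum m_{ab}^2)/(Q+m)$ rather than the claimed $(\sum m_{ab}^2-m)/Q$, so I would state it explicitly at the outset.

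The core of the argument is a single combinatorial identity. For fixed labels $a, b$ the set $\{\,i\in\Ss : \bX_i = a,\ \bY_i = b\,\}$ has exactly $m_{ab}$ elements, and so contributes $m_{ab}(m_{ab}-1)$ ordered pairs of distinct indices to $\sum_{i\neq j}\delta_{\bX_i,\bX_j}\delta_{\bY_i,\bY_j}$. Summing over $(a,b)$ and using $\sum_{a,b}m_{ab}=m$ gives
\[
 \sum_{i\neq j}\delta_{\bX_i,\bX_j}\delta_{\bY_i,\bY_j} \;=\; \sum_{a,b} m_{ab}^2 - m.
\]
By the same reasoning with one index marginalised,
\[
 \sum_{i\neq j}\delta_{\bY_i,\bY_j} \;=\; \sum_b m_{+b}^2 - m \;=\; Q,
 \qquad
 \sum_{i\neq j}\delta_{\bX_i,\bX_j} \;=\; \sum_a m_{a+}^2 - m \;=\; P.
\]

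With these identities the rest is a mechanical substitution. For precision, expand $(1-\delta_{\bX_i,\bX_j})\delta_{\bY_i,\bY_j}$ in the denominator of \cref{eq:naive_def_precision}; the $\delta_{\bX_i,\bX_j}\delta_{\bY_i,\bY_j}$ contributions cancel and what remains is $\sum_{i\neq j}\delta_{\bY_i,\bY_j}=Q$. Hence
\[
 \precision(\bX,\bY) \;=\; \frac{\sum_{a,b} m_{ab}^2 - m}{Q} \;=\; -\frac{m}{Q} + \frac{1}{Q}\sum_{a,b} m_{ab}^2,
\]
which is exactly \cref{eq:simple_def_precision} with $\alpha_{\textrm{Precision}}=-m/Q$ and $\beta_{\textrm{Precision}}=1/Q$. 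The recall case is completely symmetric: the denominator of \cref{eq:naive_def_recall} collapses to $P$, and the same manipulation yields \cref{eq:simple_def_recall}.

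There is no substantive obstacle; the only subtle point is the pair-counting convention noted above, together with the minor bookkeeping check that $P$ and $Q$ are in fact positive so that the ratios are well-defined (this follows whenever $\bX$ or $\bY$ has at least one cluster of size $\geq 2$, which is assumed implicitly by the metric being nondegenerate). Once the convention is fixed the proof reduces to the three one-line identities above and a direct substitution, so I would present it as a single short calculation.
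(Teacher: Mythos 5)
Your proposal is correct and follows essentially the same route as the paper: reduce the Kronecker-delta pair sums to counts expressed via the matching matrix, observe that the denominator collapses to $Q$ (resp.\ $P$), and read off the affine form of \cref{def:simple_precision_recall}. The only difference is presentational --- you derive the counting identities directly (and explicitly fix the distinct-pair convention, which the paper leaves implicit), whereas the paper cites them from \citet{albatineh2006similarity}.
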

    \begin{proof}
        According to \citet{albatineh2006similarity} the following relations hold:
        \begin{align}
            \sum\limits_{i, j \in \Ss} \delta_{\bX_i, \bX_j} \delta_{\bY_i, \bY_j} &= \frac{1}{2}\sum\limits_{i=1}^I\sum\limits_{j=1}^J m_{ij}^2 - \frac{m}{2}, \\
            \sum\limits_{i, j \in \Ss} \delta_{\bX_i, \bX_j} (1 - \delta_{\bY_i, \bY_j}) &= \frac{P}{2} - \frac{1}{2}\sum\limits_{i=1}^I\sum\limits_{j=1}^J m_{ij}^2 + \frac{m}{2}, \\
            \sum\limits_{i, j \in \Ss} (1 - \delta_{\bX_i, \bX_j}) \delta_{\bY_i, \bY_j} &= \frac{Q}{2} - \frac{1}{2}\sum\limits_{i=1}^I\sum\limits_{j=1}^J m_{ij}^2 + \frac{m}{2}.
        \end{align}
        Inserting this in \cref{eq:naive_def_precision} yields
        \begin{align}
             \precision(\bX, \bY) &= \frac{\frac{1}{2}\sum\limits_{i=1}^I\sum\limits_{j=1}^J m_{ij}^2 - \frac{m}{2}}{\frac{1}{2}\sum\limits_{i=1}^I\sum\limits_{j=1}^J m_{ij}^2 - \frac{m}{2} + \frac{P}{2} - \frac{1}{2}\sum\limits_{i=1}^I\sum\limits_{j=1}^J m_{ij}^2 + \frac{m}{2}} \\
             &= \frac{\frac{1}{2}\sum\limits_{i=1}^I\sum\limits_{j=1}^J m_{ij}^2 - \frac{m}{2}}{\frac{Q}{2}}
             = \frac{\sum\limits_{i=1}^I\sum\limits_{j=1}^J m_{ij}^2 - m}{Q} \\
             &= -\frac{m}{Q} + \frac{1}{Q} \sum\limits_{i=1}^I\sum\limits_{j=1}^J m_{ij}^2,
        \end{align}
        recovering \cref{eq:simple_def_precision}. The equality of \cref{eq:naive_def_recall} and \cref{eq:simple_def_recall} follows analogously, concluding the proof.
    \end{proof}
    
    After establishing equality of the definition, we will continue using the less intuitive but mathematically more convenient \cref{def:simple_precision_recall}. Now we can continue with normalizing the metrics. For this, we use the result of \citet{albatineh2006similarity}, that any similarity index in the form $S = \alpha_\textrm{S} + \beta_\textrm{S} \sum_{i=1}^I\sum_{j=1}^J m_{ij}^2$ can be adjusted by 
    \begin{align} \label{eq:adjusted_similarity_index}
        \operatorname{AS} = \frac{\operatorname{S}(\bX, \bY) - \E_{\bY'}\left[\operatorname{S}(\bX, \bY')\right]}{1 - \E_{\bY'}\left[\operatorname{S}(\bX, \bY')\right]} = \frac{\sum\limits_{i=1}^I\sum\limits_{j=1}^J m_{ij}^2 - \E_{\bY'}\left[\sum\limits_{i=1}^I\sum\limits_{j=1}^J m_{ij}^2\right]}{\frac{1 - \alpha_\textrm{S}}{\beta_\textrm{S}} - \E_{\bY'}\left[\sum\limits_{i=1}^I\sum\limits_{j=1}^J m_{ij}^2\right]},
    \end{align}
    where $\E_{\bY'}[\sum_{i=1}^I\sum_{j=1}^J m_{ij}^2]$ can be computed using a generalized hypergeometric distribution \citep{hubertComparingPartitions1985} as
    \begin{align}
        \E_{\bY'}\left[\sum\limits_{i=1}^I\sum\limits_{j=1}^J m_{ij}^2\right] = \frac{\sum\limits_{i=1}^I\sum\limits_{j=1}^J m_{i+}^2 m_{+j}^2}{m(m-1)} + \frac{m^2 - \left( \sum\limits_{i=1}^I m_{i+}^2 \sum\limits_{j=1}^J m_{+j}^2 \right)}{m-1}.
    \end{align}
    We denote the adjusted precision and adjusted recall as $\aprecision$ and $\arecall$, respectively:
    \begin{definition}[Adjusted Rand Precision \& Recall.] \label{def:adjusted_precision_recall}
        \begin{align}
            \aprecision(\bX, \bY) &= \frac{\sum\limits_{i=1}^I\sum\limits_{j=1}^J m_{ij}^2 - \E_{\bY'}\left[\sum\limits_{i=1}^I\sum\limits_{j=1}^J m_{ij}^2\right]}{\frac{1 - \alpha_\textrm{Precision}}{\beta_\textrm{Precision}} - \E_{\bY'}\left[\sum\limits_{i=1}^I\sum\limits_{j=1}^J m_{ij}^2\right]} \\
            \arecall(\bX, \bY) &= \frac{\sum\limits_{i=1}^I\sum\limits_{j=1}^J m_{ij}^2 - \E_{\bY'}\left[\sum\limits_{i=1}^I\sum\limits_{j=1}^J m_{ij}^2\right]}{\frac{1 - \alpha_\textrm{Recall}}{\beta_\textrm{Recall}} - \E_{\bY'}\left[\sum\limits_{i=1}^I\sum\limits_{j=1}^J m_{ij}^2\right]}.
        \end{align}
    \end{definition}
    An implementation of these metrics is given in \cref{lst:arr_arp}.
    
    \begin{prop}[Symmetry of arguments.] \label{prop:ar_ap_antisymmetry}
        For two segmentation maps $\bX, \bY \in \mathbb{Z}^{H \times W}$, $\precision(\bX, \bY) = \recall(\bY, \bX)$.
    \end{prop}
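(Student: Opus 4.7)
The plan is to reduce the claim to a bookkeeping exercise on the matching matrix. The key observation is that if $m \in \mathbb{Z}^{I\times J}$ is the matching matrix for the ordered pair $(\bX,\bY)$, then the matching matrix $m'$ for the swapped pair $(\bY,\bX)$ is simply the transpose, i.e.\ $m'_{ji} = m_{ij}$. I would state this as the first step, noting that it follows directly from the definition of $m_{ij}$ as the number of pixels with label $i$ in the first argument and label $j$ in the second.

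Next, I would translate this transposition into what happens to the ingredients of \cref{def:simple_precision_recall}. The total pixel count $m = \sum_{ij} m_{ij}$ is invariant under transposition, and $\sum_{ij}(m'_{ji})^2 = \sum_{ij} m_{ij}^2$, so the quadratic term in the numerator is preserved. The row and column marginals swap roles: $m'_{i+} = m_{+i}$ and $m'_{+j} = m_{j+}$. Consequently, the quantity $P'$ computed from $m'$ (the row-marginal term in \cref{def:simple_precision_recall} applied to $(\bY,\bX)$) equals $\sum_i m_{+i}^2 - m = Q$ (the column-marginal term computed from $m$). This is the heart of the argument.

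With these identifications in hand, I would plug them into the formula for $\recall$ applied to $(\bY,\bX)$:
\begin{equation*}
\recall(\bY,\bX) \;=\; -\frac{m}{P'} + \frac{1}{P'}\sum_{ij}(m'_{ij})^2 \;=\; -\frac{m}{Q} + \frac{1}{Q}\sum_{ij}m_{ij}^2,
\end{equation*}
which is exactly $\precision(\bX,\bY)$ by \cref{eq:simple_def_precision}. I would then remark that since the definitions in \cref{def:naive_precision_recall} and \cref{def:simple_precision_recall} have already been shown to agree, this suffices.

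I do not expect a real obstacle here: the only subtlety is being careful about which marginal ($P$ versus $Q$) is associated with which argument. The one place where a careless proof could slip is confusing the convention for $P,Q$ between the main-text and appendix definitions; I would resolve this by explicitly writing $P(\bX,\bY) = \sum_i m_{i+}^2 - m$ and $Q(\bX,\bY) = \sum_j m_{+j}^2 - m$ and then verifying $P(\bY,\bX) = Q(\bX,\bY)$ from the transposition identity.
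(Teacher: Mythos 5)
Your proof is correct, but it takes a genuinely different route from the paper's. The paper proves the claim directly from \cref{def:naive_precision_recall}: writing out $\recall(\bY,\bX)$ as a ratio of sums of products of Kronecker deltas and simply commuting the factors $\delta_{\bY_i,\bY_j}\delta_{\bX_i,\bX_j}$ turns it verbatim into the defining expression for $\precision(\bX,\bY)$ --- a two-line computation that never touches the matching matrix. You instead work in \cref{def:simple_precision_recall}, observing that swapping the arguments transposes $m$, leaves the total count $m$ and $\sum_{ij} m_{ij}^2$ invariant, and exchanges the marginals so that $P(\bY,\bX)=Q(\bX,\bY)$; substituting into the recall formula then recovers the precision formula. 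Both arguments are sound. The paper's is shorter and more elementary; yours leans on the (already established) equivalence of the two definitions, but it buys something the paper's proof does not: the same transposition argument applies to the expectation term $\E_{\bY'}\bigl[\sum_{ij} m_{ij}^2\bigr]$, which depends only symmetrically on the two sets of marginals, so your route extends immediately to the adjusted versions $\aprecision(\bX,\bY)=\arecall(\bY,\bX)$ --- which is the form in which the symmetry is actually invoked in the main text, even though \cref{prop:ar_ap_antisymmetry} and its proof only cover the unadjusted $\precision$ and $\recall$. Your explicit flagging of the $P$/$Q$ convention clash between the main-text and appendix definitions is also well taken, since the two are indeed swapped relative to one another.
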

    \begin{proof}
        Recalling the first definition of the rand precision and recall \cref{def:naive_precision_recall} yields
        \begin{align}
            \recall(\bY, \bX) &= \frac{\sum\limits_{i, j \in \Ss} \delta_{\bY_i, \bY_j} \delta_{\bX_i, \bX_j}}{\sum\limits_{i, j \in \Ss} \delta_{\bY_i, \bY_j} \delta_{\bX_i, \bX_j} + \sum\limits_{i, j \in \Ss} \delta_{\bY_i, \bY_j} (1 - \delta_{\bX_i, \bX_j})} \\
            &= \frac{\sum\limits_{i, j \in \Ss} \delta_{\bX_i, \bX_j} \delta_{\bY_i, \bY_j}}{\sum\limits_{i, j \in \Ss} \delta_{\bX_i, \bX_j} \delta_{\bY_i, \bY_j} + \sum\limits_{i, j \in \Ss} (1 - \delta_{\bX_i, \bX_j}) \delta_{\bY_i, \bY_j}} \\
            &= \precision(\bX, \bY),
        \end{align}
        concluding the proof.
    \end{proof}
    
    \begin{prop}[ARI is the $F_1$ score of ARP and ARR] \label{prop:ari_f1_harmonic_mean_ap_ar}
        The $\ari$ is the $F_1$ score (i.e., harmonic mean) of the $\aprecision$ and $\arecall$:
        \begin{align}
            \ari = \frac{2}{\aprecision^{-1} + \arecall^{-1}}.
        \end{align}
    \end{prop}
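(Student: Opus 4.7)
The plan is to exploit the fact that $\ari$, $\aprecision$, and $\arecall$ share a common numerator, which collapses the claim to a single algebraic identity between denominators. Write $N := \sum_{i,j} m_{ij}^2 - \E_{\bY'}[\sum_{i,j} m_{ij}^2]$ and let $D_I$, $D_P$, $D_R$ denote the denominators of $\ari$, $\aprecision$, $\arecall$ respectively. Both adjusted quantities then take the form $\aprecision = N/D_P$ and $\arecall = N/D_R$, so the harmonic mean simplifies to
\[
\frac{2}{\aprecision^{-1} + \arecall^{-1}} \;=\; \frac{2}{D_P/N + D_R/N} \;=\; \frac{2N}{D_P + D_R}.
\]
It therefore suffices to verify the purely algebraic identity $D_I = \tfrac{1}{2}(D_P + D_R)$.

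Next I would just read off the denominators from the formulas in \cref{sec:related_work} and \cref{def:adjusted_precision_recall}: $D_P = Q + m - E$ and $D_R = P + m - E$, where $E := \E_{\bY'}[\sum_{i,j} m_{ij}^2]$. Adding these gives $D_P + D_R = P + Q + 2m - 2E$, so $\tfrac{1}{2}(D_P + D_R) = \tfrac{1}{2}(P+Q) + m - E$, which matches $D_I$ in its Hubert--Arabie normalization (the one consistent with the $\aprecision$, $\arecall$ forms above). Plugging back yields $\ari = N/D_I = 2N/(D_P + D_R) = 2/(\aprecision^{-1} + \arecall^{-1})$, which is the claim.

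There is no real obstacle here: once one notices that all three adjusted indices are of the form (common numerator)/(something linear in $P, Q, m, E$), the result is a one-line manipulation. The only mild subtlety is bookkeeping around the normalization of $D_I$ --- one has to check that the $\ari$ denominator really sits at the midpoint of $D_P$ and $D_R$, which is precisely the property that makes the Hubert--Arabie adjustment compatible with a harmonic-mean decomposition. A pleasant byproduct is that the identity $\ari = 2\aprecision\arecall/(\aprecision + \arecall)$ immediately sandwiches $\ari$ between $\aprecision$ and $\arecall$ whenever the two have the same sign, providing a clean route to \cref{prop:ap_ar_bound_ari} as well.
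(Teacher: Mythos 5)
Your proof is correct and is essentially the paper's own argument: both reduce the claim to the fact that all three adjusted indices share the numerator $N$ and that the $\ari$ denominator is the arithmetic mean of the $\aprecision$ and $\arecall$ denominators (the paper states this as $\gamma_\textrm{RI} = \tfrac{1}{2}(\gamma_\textrm{Precision} + \gamma_\textrm{Recall})$ before subtracting the common expectation term). Your parenthetical on normalization is well taken --- the identity holds for the denominator $\tfrac{P+Q}{2} + m - E$ used in \cref{appx:definition_properties_precision_recall}, not for the $P + Q + 2m - E$ printed in \cref{sec:related_work}.
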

    \begin{proof}
        We begin by rewriting \cref{eq:adjusted_similarity_index} as
        \begin{align}
            \operatorname{AS} = \frac{A}{\gamma - B}, 
        \end{align}
        with $\gamma = \frac{1 - \alpha}{\beta}$, $A = \sum\limits_{i=1}^I\sum\limits_{j=1}^J m_{ij}^2 - \E_{\bY'}[\sum\limits_{i=1}^I\sum\limits_{j=1}^J m_{ij}^2]$ and $B = \E_{\bY'}[\sum\limits_{i=1}^I\sum\limits_{j=1}^J m_{ij}^2]$ to ease notation. Note that for the similarity indices in question - ARI, AP and AR - $\gamma$ has the values $\gamma_\textrm{RI} = \frac{P + Q}{2} + m, \gamma_\textrm{Precision} = Q + m$ and $\gamma_\textrm{Recall} = P + m$ and $A, B$ are shared across them.
        
        We continue by showing a relation between the three $\gamma$ values:
        \begin{align}
            \gamma_\textrm{RI} &= \frac{\gamma_\textrm{Precision} + \gamma_\textrm{Recall}}{2}
            = \frac{1}{2} \left( \frac{A}{\aprecision} + B + \frac{A}{\arecall} + B \right)
            = \frac{A}{2} \left( \frac{1}{\aprecision} + \frac{1}{\arecall} \right) + B
        \end{align}
        Inserting this relation into the definition of $\ari$ yields
        \begin{align}
            \ari &= \frac{A}{\gamma_\textrm{RI} - B}
            = \frac{A}{\frac{A}{2}(\frac{1}{\aprecision} + \frac{1}{\arecall})}
            = \frac{2}{\aprecision^{-1} + \arecall^{-1}},
        \end{align}
        concluding the proof.
    \end{proof}
    
    \begin{prop}[$\aprecision$ and $\arecall$ bound $\ari$.] \label{prop:ap_ar_bound_ari}
        For two segmentation maps $\bX, \bY \in \mathbb{Z}^{H \times W}$: 
        \begin{align}
            \min(\aprecision(\bX, \bY), \arecall(\bX, \bY)) \leq \ari(\bX, \bY) \leq \max(\aprecision(\bX, \bY), \arecall(\bX, \bY)).
        \end{align}
    \end{prop}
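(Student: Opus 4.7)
The plan is to deduce the bound directly from \cref{prop:ari_f1_harmonic_mean_ap_ar}, which already identifies $\ari$ as the harmonic mean of $\aprecision$ and $\arecall$:
\begin{align*}
    \ari = \frac{2}{\aprecision^{-1} + \arecall^{-1}} = \frac{2 \, \aprecision \, \arecall}{\aprecision + \arecall}.
\end{align*}
So the theorem reduces to the textbook fact that the harmonic mean of two (positive) reals lies between them.

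First, I would dispatch the trivial case $\aprecision = \arecall$, in which the harmonic mean equals the common value and both inequalities hold with equality. Then, without loss of generality assume $\aprecision \le \arecall$ with both quantities strictly positive (the relevant regime for object-centric evaluation; the degenerate cases where one equals zero can be checked separately by taking limits). From $\aprecision \le \arecall$ one gets $\arecall^{-1} \le \aprecision^{-1}$, hence
\begin{align*}
    \aprecision^{-1} + \arecall^{-1} \le 2\,\aprecision^{-1} \quad \text{and} \quad \aprecision^{-1} + \arecall^{-1} \ge 2\,\arecall^{-1},
\end{align*}
and inverting these (which preserves the direction since the quantities are positive) and multiplying by $2$ yields $\aprecision \le \ari \le \arecall$, i.e. the desired sandwich.

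The only real subtlety is sign: adjusted indices can be negative in principle, so I would add a short remark verifying the claim remains valid when $\aprecision$ and $\arecall$ have the same sign (the inversion argument goes through verbatim), and noting that on practical segmentation outputs these quantities are non-negative. The mixed-sign case can be handled by observing from the explicit formulas in \cref{def:adjusted_precision_recall} that $\aprecision$ and $\arecall$ share the sign of $\sum_{ij} m_{ij}^2 - \E_{\bY'}[\sum_{ij} m_{ij}^2]$, so they always have the same sign, and the same numerator-denominator comparison used in the proof of \cref{prop:ari_f1_harmonic_mean_ap_ar} carries over. This sign observation is the main (and only) non-mechanical step; once it is in place the bound is immediate from the harmonic-mean identity.
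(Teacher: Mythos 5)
Your proof follows essentially the same route as the paper's: both invoke \cref{prop:ari_f1_harmonic_mean_ap_ar} to identify $\ari$ as the harmonic mean of $\aprecision$ and $\arecall$ and then apply the standard fact that the harmonic mean lies between its arguments. If anything, you are more careful than the paper, which asserts the final step ``directly follows'' without addressing the sign of the adjusted indices; your observation that $\aprecision$ and $\arecall$ share the sign of the common numerator (since both denominators are non-negative) closes that small gap.
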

    \begin{proof}
        Per \cref{prop:ari_f1_harmonic_mean_ap_ar} the $\ari$ is the harmonic mean of $\aprecision$ and $\arecall$, i.e.
        \begin{align}
            \ari^{-1} = \frac{\aprecision^{-1} + \aprecision^{-1}}{2}.
        \end{align}
        From here it directly follows that $\ari^{-1}$ and, therefore, $\ari$ is bound by $\aprecision$ and $\arecall$, too.
    \end{proof}
    
    Note that although the previous proposition only holds for \emph{single} pairs of segmentation maps (\ie, single dataset samples), we empirically find that the relation also almost alaways holds when averaging $\ari$, $\aprecision$ and $\arecall$ over multiple samples/a full dataset.

\section{Experimental Details} \label{sec:experiemntal_details}
    All experimental results presented in this paper are averages (and standard deviations, if applicable) over three random seeds, with the exception of the experiments with OSRT. Here, for computational reasons, results for a single seed are shown.

    \paragraph{Vanilla Slot Attention (AE)}
    We use the same general architecture as proposed by \citet{locatelloObjectCentricLearningSlot2020}, namely a CNN encoder, followed by Slot Attention and finally by a CNN decoder.
    For CLEVR, we used the same encoder architecture as \citet{locatelloObjectCentricLearningSlot2020}. This is followed by Slot Attention with $3$ iterations, a query/key/value projection dimensionality of $128$ and hidden MLP dimensionality of $256$. Finally, as a decoder we use a spatial broadcast decoder consisting of four transposed convolutional layers and linear position embedding \citep{watters2019spatial,locatelloObjectCentricLearningSlot2020}. 
    For MSN we use a ResNet-18 \citep{heDeepResidualLearning2015} encoder, and reduce the query/key/value projection and hidden MLP dimensionality to $64$ and $128$, respectively. The decoder now consists of 4 transposed and 1 normal convolutional layer and linear position embedding.
    We train models on both datasets for $300,000$ iterations with a batch size of $64$ using Adam \citep{kingmaAdamMethodStochastic2017}.
    
    \paragraph{Slot Attention for Video}
    We use the same architecture as proposed by \citet{kipfConditionalObjectCentricLearning2022}. Specifically, the encoder consists of five convolutional layers with kernel size $5$, followed by a linear position embedding layer \cite{locatelloObjectCentricLearningSlot2020}, layer norm and two additional convolutional layers with a kernel size of $1$. All convolutional layers except the last are followed by a ReLU non-linearity. While for the higher-dimensional MOVi images the convolutional layers use a stride of $2$, for CATER a stride of $1$ is used.
    The encoder is followed by a ``corrector'' represented by Slot Attention with $2$ iterations, a query/key/value projection dimensionality of $128$ and a hidden MLP dimension of $256$.
    Following \citet{kipfConditionalObjectCentricLearning2022} we used no predictor for experiments on CATER and a multi-head dot-product attention mechanism \citep{vaswaniAttentionAllYou2017} with $4$ attention heads, a projection dimensionality of $128$ and hidden MLP dimensionality of $256$. 
    Finally, as a decoder the same spatial broadcast decoder architecture~\citep{watters2019spatial} as used by \citet{locatelloObjectCentricLearningSlot2020} was used.
    We train the models for $200,000$ iterations with Adam with a batch size of $64$.
    
    \paragraph{Vanilla Slot Attention (NVS)}
    We use the same encoder architecture as outline above for the AE objective, expect that we introduce an additional camera-pose embedding, following \citet{sajjadiSceneRepresentationTransformer2022}. To save memory, we reduce the number of Slot Attention iterations down to $1$. For the decoder, we also use the same architecture as above, except that we again introduce a camera-pose embedding along the linear position embedding encoding.
    We train the models for $500,000$ iterations with Adam with a batch size of $64$.
    
    \paragraph{Object Scene Representation Transformer}
    We use the same architecture and hyperparameters as proposed by \citet{sajjadiObjectSceneRepresentation2022} with the exception that instead of using a learned initialization of the slots in the Slot Attention module, we randomly initialize them.
    
\section{Extended Version of \cref{fig:precsion_recall_ari_checkerboard}} \label{sec:precsion_recall_ari_checkerboard_details}
    A more detailed version of \cref{fig:extended_precsion_recall_ari_checkerboard} is displayed in \ref{fig:extended_precsion_recall_ari_checkerboard}. Specifically, the $\ari$, $\aprecision$ and $\recall$ curves shown here were generated on the synthethic data shown on the left side of the figure. Namely, the ground-truth equals a square divided into $4 \times 4$ different segments. In the predictions, we simulated on the one hand merging of different segments (undersegmentation) and on the other separation of segments (oversegmentation). For visualization purposes, we only show a zoomed-in version of the simulated predictions.
    \begin{figure}[tbh]
        \centering
        \includegraphics[width=\linewidth]{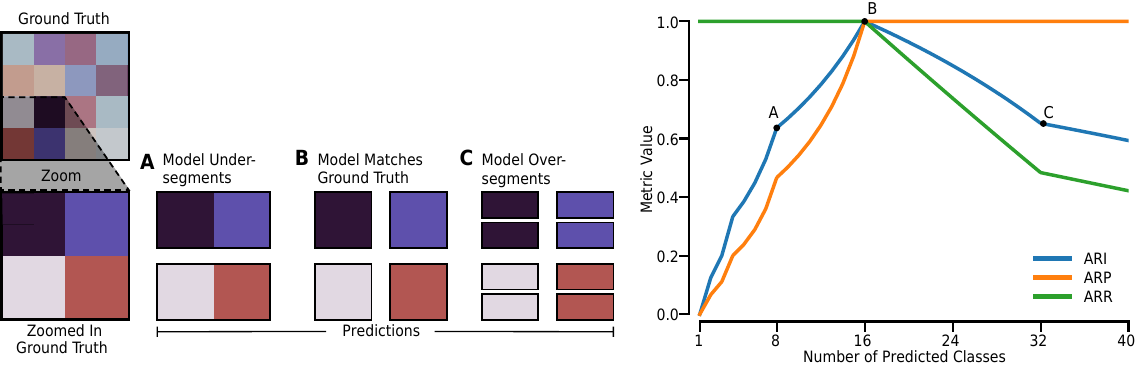}
        \caption{\textbf{Zooming into imperfect segmentations.}
        Detailed version of \cref{fig:precsion_recall_ari_checkerboard}.
        Left: Ground-truth segmentation of a zoomed-in image patch and predictions for three models with different characteristics.
        Right: $\ari$, $\aprecision$ and $\arecall$ as a function of number of classes in the simulated predictions.
        $\ari$ is clearly sensitive to both under- and oversegmentation, however it does not distinguish between these cases, as models A and C achieve the same $\ari$ score.
        In contrast, the proposed $\aprecision$ and $\arecall$ metrics successfully differentiate the models: the undersegmenting model A has a low $\aprecision$ but perfect $\arecall$, while the oversegmenting model B has perfect $\aprecision$, but low $\arecall$. The number of predicted classes has been changed by starting from the ground-truth segmentation (\ie, 16 classes), and iteratively merging neighboring segments or splitting segments to increase and decrease the number of classes, respectively. \ms{consider a longer explanation as to how the plot has been generated (\ie, how are we adding segments as we go from 1 to 40?}
        }
        \label{fig:extended_precsion_recall_ari_checkerboard}
    \end{figure}
    
\clearpage
\section{Extended Experimental Results} \label{sec:extended_results}
    \paragraph{Influence of Slot Initialization} Analogously to \cref{fig:msn_e_clevr_sa_nvs_rnd_vs_learned}, we tested the influence of the slot initialization --- random \vs a learned initialization --- for CLEVR in \cref{fig:clevr_sa_nvs_rnd_vs_learned} and obtain similar results.
    \begin{figure}[h!]
        \centering
        \includegraphics[width=0.5\linewidth]{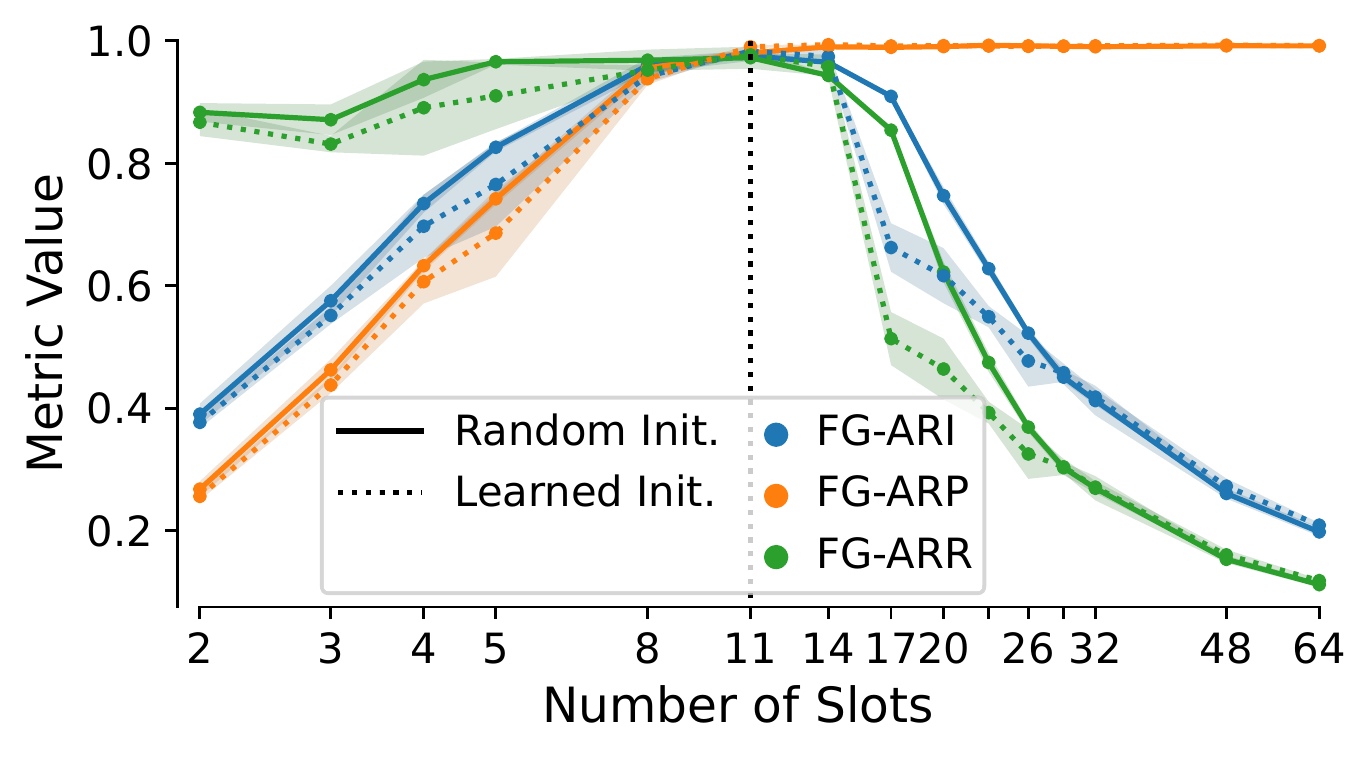}
        \caption{
        \textbf{Influence of Slot Initialization for Slot Attention (AE) on CLEVR.}
        Comparison of the effect of slot initialization in Slot Attention in a purely unsupervised setting without conditioning for the auto-encoding (AE) objective on CLEVR, measured by $\fgari$ (\textcolor{ari}{blue}), $\fgaprecision$ (\textcolor{arp}{orange}), $\fgarecall$ (\textcolor{arr}{green}).
        Random initialization leads to higher FG-ARR (and therefore also FG-ARI) when using too many slots for the NVS objective.
        }
        \label{fig:clevr_sa_nvs_rnd_vs_learned}
    \end{figure}

    Furthermore, we also test the influence of the slot initialization for another variant of Slot Attention: Bi-level Optimization for Slot Attention (BO-QSA) \citep{jiaimproving}. This model leverages the implicit optimization technique proposed by~\citet{chang2022object} to use learned initializations of slots and is claimed to outperform various earlier slot-based models in unsupervised image segmentation and reconstruction. We now test whether using implicit optimization does indeed solve the previously observed issue with learned initializations (see~\cref{fig:msn_e_sa_nvs_rnd_vs_learned}) using a re-implementation of the model using the NVS objective on MSN-E. The results in~\cref{fig:msn_e_sa_nvs_rnd_vs_learned_boqsa} clearly show --- in line with our previous results --- that random initialization outperforms learned initialization of slots also when for BO-QSA.
    \begin{figure}[h!]
        \centering
        \includegraphics[width=0.48\linewidth]{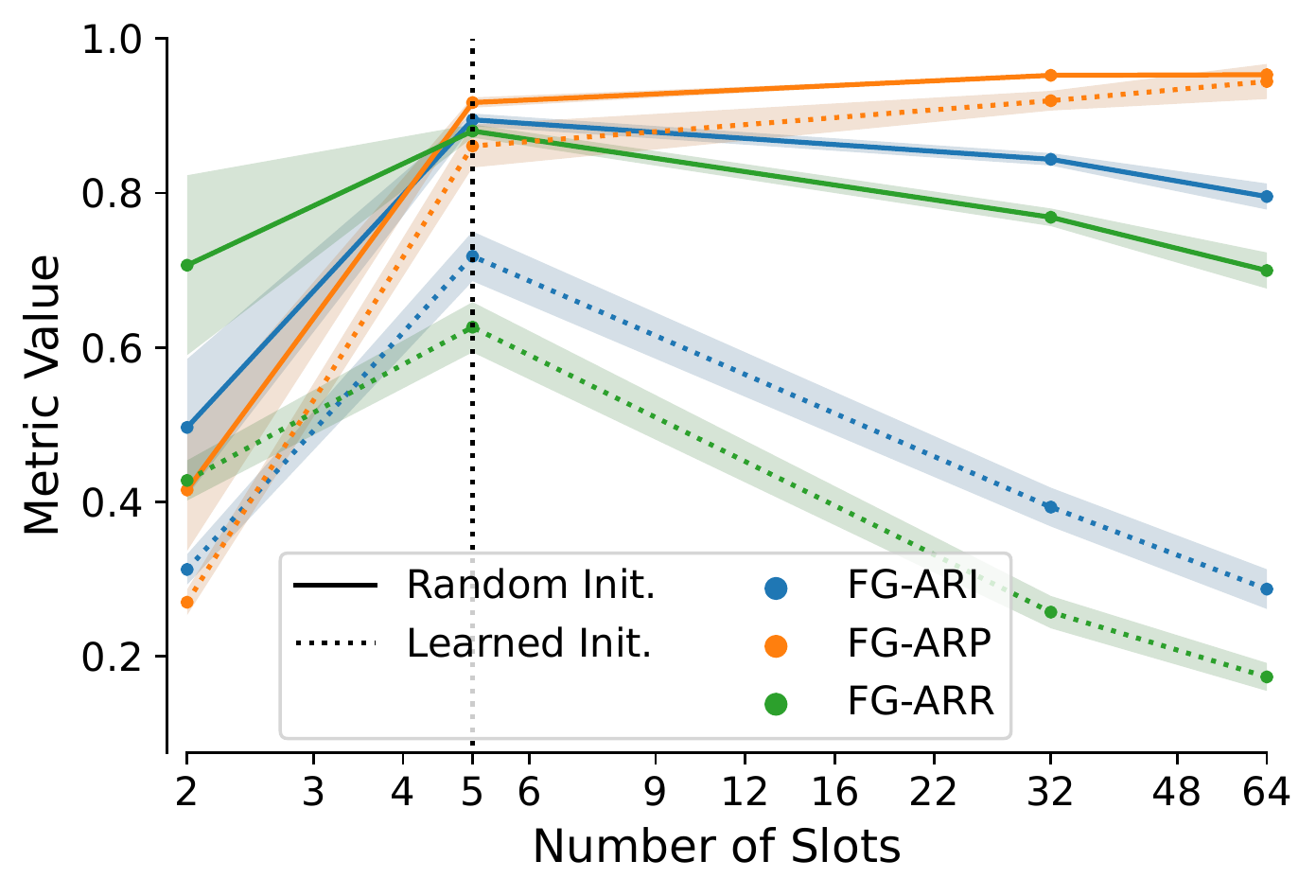}
        \caption{
        \textbf{Influence of Slot Initialization for BO-QSA (NVS) on MSN-E.}
        Comparison of the effect of slot initialization in Bi-level Optimized Query Slot Attention (BO-QSA) in a purely unsupervised setting without conditioning for novel-view-synthesis (NVS) objective on MSN-E, measured by $\fgari$ (\textcolor{ari}{blue}), $\fgaprecision$ (\textcolor{arp}{orange}), $\fgarecall$ (\textcolor{arr}{green}).
        As for normal SA (see~\cref{fig:msn_e_sa_nvs_rnd_vs_learned}) random initialization leads again to higher FG-ARR (and therefore also FG-ARI) when using too many slots for the NVS objective.
        }
        \label{fig:msn_e_sa_nvs_rnd_vs_learned_boqsa}
    \end{figure}
    
\paragraph{Slot Attention for Video on CATER} Analogously to \cref{fig:movi_ac_savi_n_slots}, we also tested the influence of conditioning on SAVi on the CATER dataset \cite{girdharCATERDiagnosticDataset2020}. The results, displayed in \cref{fig:cater_savi_n_slots}, indicate the same model behavior as the results for MOVi-A/C.
    \begin{figure}[ht!]
        \centering
        \includegraphics[width=\linewidth]{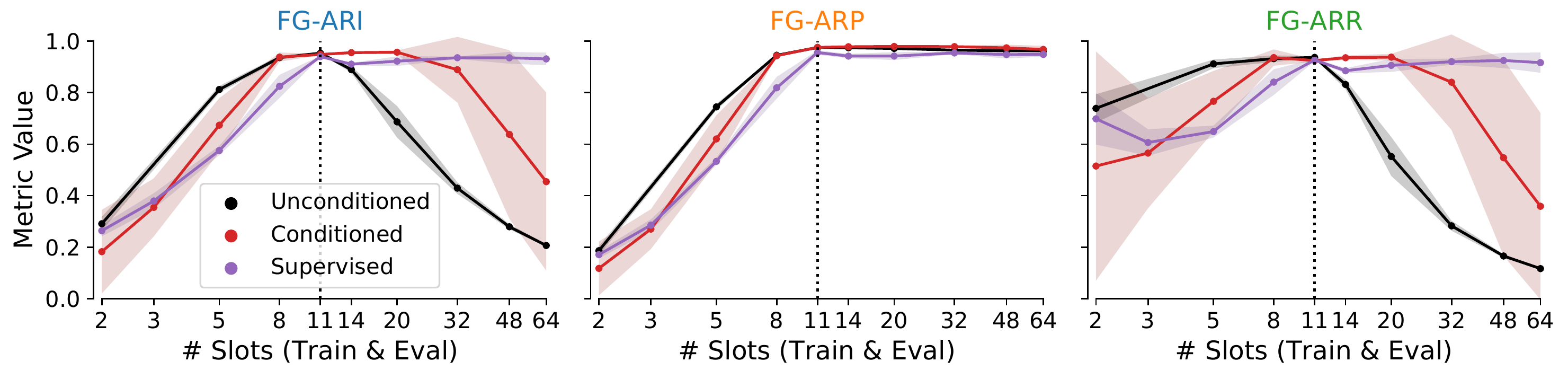}
        \caption{\textbf{CATER}. Sensitivity of Slot Attention for Video (SAVi) to the number of slots for different training variants: an unsupervised model trained on pixels only (unconditioned); an unsupervised model train on pixels and the bounding-boxes of objects in the first frame (conditioned); the same as the conditioned model but with additional supervised bounding-box regression in the last frame.
        Scenes contain up to 10 objects, meaning that 11 slots (dotted line) are sufficient to represent all objects and the scene's background.}
        \label{fig:cater_savi_n_slots}
    \end{figure}
    
\paragraph{Dependence of Model Performance on Number of Objects} \label{sec:model_performance_number_of_objects}
    The results in \cref{sec:experiments}, most importantly in \cref{fig:movi_ac_savi_n_slots}, showed that models yield undesired object partitions if they are trained with too few slots, when evaluated over the full test dataset. However, there are two possible behaviors yielding to the same observation: For one, the model could just yield suboptimal segmentations for all samples independent of the actual number of objects; for another, it could yield the desired partition for samples with few objects and become gradually worse the more objects are present in the scene. While the former option corresponds to a total failure of the model in terms of object discovery, the latter, while imperfect, would be better.
    \begin{figure}[h!]
        \centering
        \includegraphics[width=0.6\textwidth]{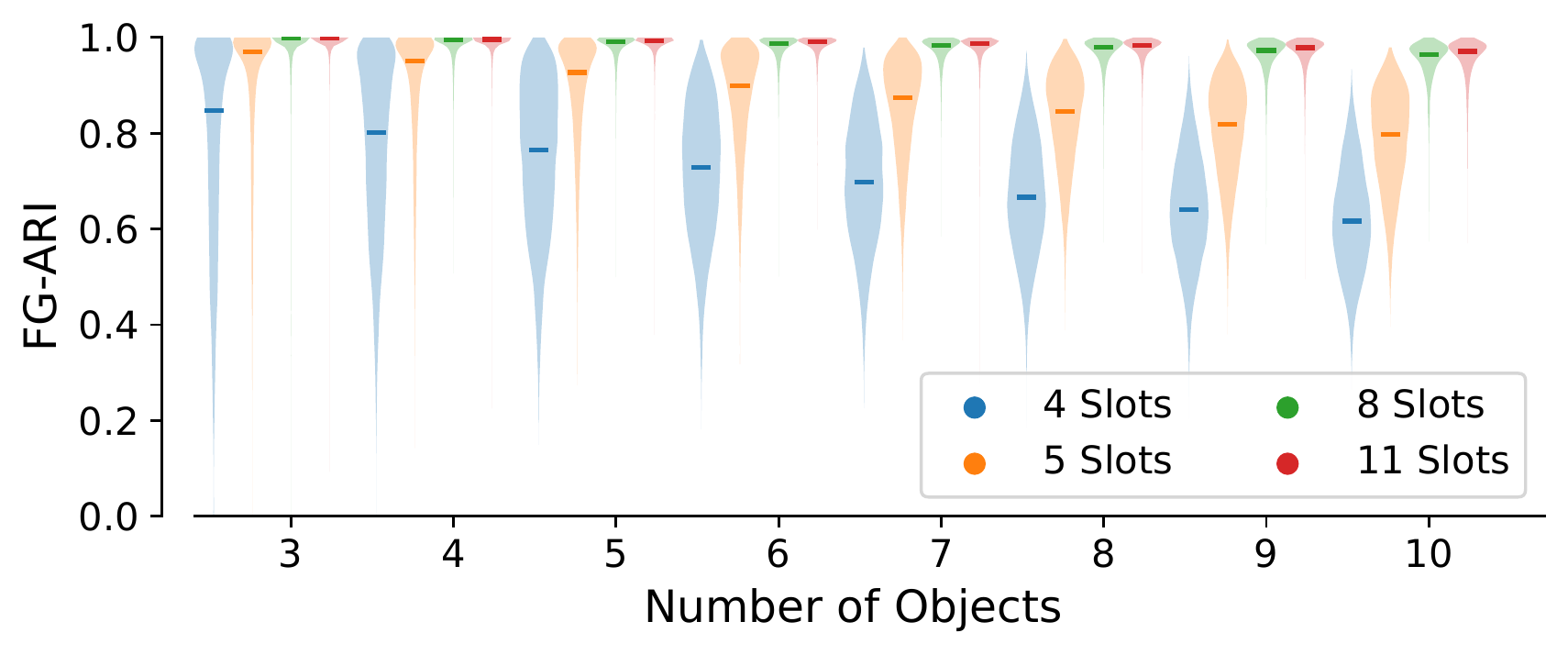}
        \caption{\textbf{MOVi-A}. Segmentation performance measured by FG-ARI as a function of number of objects in the test samples for models with different numbers of slots.}
        \label{fig:movi_a_ari_per_number_of_objects}
    \end{figure}
    
    To investigate this, we visualize the performance of unconditioned SAVi models with a varying number of slots on MOVi-A in \Cref{fig:movi_a_ari_per_number_of_objects}, grouped by the number of ground-truth objects per scene. Here, we see again that the more slots the model has, the better the overall performance becomes. Moreover, the higher the number of slots is, the lower is the performance decrease observed for samples with an increasing number of objects. Note, that the model with the fewest slots (\ie four) underperforms even compared to the other models for scenes with only three objects. This shows that too low number of slots during training induces an undesired notion of objectness resulting in undesired segmentation masks even for samples that could be represented with that number of slots.

\clearpage
\section{Implementation of Metrics}
\begin{lstlisting}[language=Python, caption={PyTorch implementation of the suggested (Adjusted) Rand Precision/Recall}, captionpos=b, label=lst:arr_arp]
import torch
import torch.nn.functional as F

def precision_recall(segmentation_gt: torch.Tensor, segmentation_pred: torch.Tensor,
  mode: str, adjusted: bool):
  """ Compute the (Adjusted) Rand Precision/Recall.

  Args:
    segmentation_gt: Int tensor with shape (batch_size, height, width) containing the
      ground-truth segmentations.
    segmentation_pred: Int tensor with shape (batch_size, height, width) containing the
      predicted segmentations.
    mode: Either "precision" or "recall" depending on which metric shall be computed.
    adjusted: Return values for adjusted or non-adjusted metric.
  Returns:
    Float tensor with shape (batch_size), containing the (Adjusted) Rand
      Precision/Recall per sample.
  """
  max_classes = max(segmentation_gt.max(), segmentation_pred.max()) + 1
  oh_segmentation_gt = F.one_hot(segmentation_gt, max_classes)
  oh_segmentation_pred = F.one_hot(segmentation_pred, max_classes)

  coincidence = torch.einsum("bhwk,bhwc->bkc", oh_segmentation_gt, oh_segmentation_pred)
  coincidence_gt = coincidence.sum(-1)
  coincidence_pred = coincidence.sum(-2)

  m_squared = torch.sum(coincidence**2, (1, 2))
  m = torch.sum(coincidence, (1, 2))
  # How many pairs of pixels have the smae label assigned in ground-truth segmentation.
  P = torch.sum(coincidence_gt * (coincidence_gt - 1), -1)
  # How many pairs of pixels have the smae label assigned in predicted segmentation.
  Q = torch.sum(coincidence_pred * (coincidence_pred - 1), -1)

  expected_m_squared = (P + m) * (Q + m) / (m * (m - 2)) + (m**2 - Q - P -2 * m) / (m - 1)

  if mode == "precision":
    gamma = P + m
  elif mode == "recall":
    gamma = Q + m
  else:
    raise ValueError("Invalid mode.")
  if adjusted:
    return (m_squared - expected_m_squared) / (gamma - expected_m_squared)
  else:
    return (m_squared - m) / (gamma - m)
\end{lstlisting}

\end{document}